\documentclass{article}

\usepackage[preprint]{neurips_2026}
\usepackage[utf8]{inputenc}
\usepackage[T1]{fontenc}
\usepackage{amsmath,amssymb,amsthm,mathtools,bm}

\usepackage[most]{tcolorbox}
\usepackage{xcolor}
\usepackage{enumitem}

\definecolor{contribblue}{HTML}{2563EB}
\definecolor{contribbg}{HTML}{F4F8FF}

\newtcolorbox{contributionbox}{
  enhanced,
  breakable,
  colback=contribbg,
  colframe=contribblue!80!black,
  boxrule=0.7pt,
  arc=2mm,
  left=1.5mm,
  right=1.5mm,
  top=1.2mm,
  bottom=1.2mm,
  borderline west={2.5pt}{0pt}{contribblue},
  title=\textbf{Contributions},
  coltitle=black,
  fonttitle=\bfseries,
  attach boxed title to top left={xshift=2mm,yshift=-2mm},
  boxed title style={
    colback=white,
    colframe=contribblue!80!black,
    boxrule=0.7pt,
    arc=1.5mm,
    left=1mm,
    right=1mm,
    top=0.5mm,
    bottom=0.5mm
  }
}
\usepackage{algorithm}
\usepackage{algpseudocode}
\usepackage{booktabs}
\usepackage{multirow}
\usepackage{graphicx}
\usepackage{xcolor}
\usepackage{tikz}
\usetikzlibrary{arrows.meta,positioning,fit,backgrounds}
\usepackage{booktabs}      
\usepackage{multirow}      
\usepackage{adjustbox}     
\usepackage{array}         
\usepackage{makecell}      

\usepackage{amsmath}
\usepackage{amssymb}
\usepackage{bm}

\usepackage{graphicx}
\usepackage{subcaption}
\usepackage{xcolor}

\usepackage{siunitx}       
\usepackage{placeins}      
\newcommand{\R}{\mathbb{R}}
\newcommand{\E}{\mathbb{E}}
\newcommand{\N}{\mathbb{N}}
\newcommand{\Spd}{\mathbb{S}^{p}_{++}}
\newcommand{\Sym}{\mathbb{S}^{p}}
\newcommand{\Mtraj}{\mathcal{M}_{T}}
\newcommand{\Mtrajfut}{\mathcal{M}_{T-K}}
\newcommand{\dLE}{d_{\mathrm{LE}}}
\newcommand{\expmap}{\mathrm{Exp}}
\newcommand{\Ltraj}{\mathcal{L}_{\mathrm{traj}}}
\newcommand{\Lforecast}{\mathcal{L}_{\mathrm{forecast}}}

\newcommand{\Ef}{E_{f}}
\newcommand{\Phif}{\Phi_{f}}
\newcommand{\tildePhi}{\tilde{\Phi}}
\newcommand{\tildePhif}{\tilde{\Phi}_{f}}
\newcommand{\qrw}{q_{\mathrm{rw}}}
\newcommand{\hatmu}{\hat{\mu}}
\newcommand{\hatsigma}{\hat{\sigma}}
\newcommand{\hatTheta}{\hat{\Theta}}
\newcommand{\tvglcfm}{\textsc{TVGL--CFM}}
\DeclareMathOperator{\tr}{tr}
\DeclareMathOperator{\diag}{diag}

\DeclareMathOperator{\logm}{log}
\DeclareMathOperator{\expm}{exp}
\DeclareMathOperator{\veclt}{vec_{lt}}
\newcommand{\Th}{\Theta}
\newcommand{\norm}[1]{\left\lVert #1 \right\rVert}

\newcommand{\softthr}{\mathcal{S}}

\theoremstyle{plain}
\newtheorem{proposition}{Proposition}
\theoremstyle{remark}
\newtheorem{remark}{Remark}

\usepackage{amsfonts}						
\usepackage{float}							

\usepackage[utf8]{inputenc} 
\usepackage[T1]{fontenc}    
\usepackage{hyperref}       
\usepackage{url}            
\usepackage{booktabs}       
\usepackage{amsfonts}       
\usepackage{nicefrac}       
\usepackage{microtype}      
\usepackage{xcolor}         

\title{\textbf{TVGL--CFM: Generating and Forecasting Time-Varying\\ Trajectories of Dynamic Networks\\
with Conditional Flow Matching}}

\author{%
  Om Roy\thanks{Corresponding author.} \\
  University of Strathclyde \\
  \texttt{o.roy.2022@uni.strath.ac.uk}
  \And
  Yashar Moshfeghi \\
  University of Strathclyde \\
  \texttt{yashar.moshfeghi@strath.ac.uk}
  \And
  Keith Malcolm Smith \\
  University of Strathclyde \\
  \texttt{keith.smith@strath.ac.uk}
}

\begin{document}

\maketitle

\begin{abstract}
Many complex systems, including brain networks, financial markets, and gene-regulatory
circuits, are better described by interaction structures that evolve over time than by a
single fixed graph. The time-varying graphical lasso (TVGL) estimates this structure from
multivariate signals as a temporally coherent sequence of sparse precision matrices. We
introduce \textbf{TVGL--CFM}, a unified generative framework that learns distributions over
complete SPD precision-matrix trajectories without requiring a pre-specified graph, supporting
both class-conditional generation and history-conditioned forecasting. An SPD matrix trajectory with $T$ windows lies on the product Riemannian manifold
$(\mathbb{S}_{++}^{p})^{T}$. We construct a global log-Euclidean diffeomorphism from this
product space to a Euclidean sequence space, enabling a non-autoregressive conditional
flow-matching model with a Transformer-based backbone to generate all windows jointly.
Applying the inverse map returns an SPD precision matrix at every window, guaranteeing
manifold validity without post-hoc projection. For forecasting, the source distribution is
centred on a stochastic extrapolation of the observed history in log-Euclidean coordinates,
allowing the flow to transform an informative prior into a coherent future block of precision
matrices. Across EEG motor-imagery, nonlinear dynamical systems, and gene-expression data,
TVGL--CFM generates SPD matrix trajectories that preserve class-discriminative dependency
structure and forecasts future connectivity more accurately than baselines that generate raw
signals before estimating their graphs. These results demonstrate the benefit of modelling
dynamic precision-matrix trajectories directly in their natural geometric space.
\end{abstract}
\section{Introduction}
A wide range of systems are most naturally described not by a static network but by
a network whose \emph{dependency structure changes over time}: functional brain
connectivity reconfigures within a task or under stimulation; cross-asset dependence
in financial markets shifts across regimes; sensor and infrastructure networks change
as conditions or faults evolve. When the signal at each node is (locally) approximately
Gaussian, the canonical descriptor of such structure is the \emph{precision matrix}
(inverse covariance): its zeros encode conditional independence, i.e.\ the absence of
an edge in the underlying Gaussian graphical model. Estimating such structure from
data is the graphical-lasso problem \citep{friedman2008,banerjee2008,yuan2007}, whose
time-varying extension, the time-varying graphical lasso
(TVGL) \citep{hallac2017}, estimates a \emph{chain} of sparse precision matrices that is
both sparse at each time point and temporally coherent across time. TVGL has been applied to
domains as different as neuroscience, finance, and automotive sensing. When there is no known underlying graph, which is common in various important applications, TVGL provides a robust approach that produces plausible graph trajectories.

Diffusion models are a strong generative paradigm for complex data distributions
\citep{sohldickstein2015,ho2020,song2021},
learning to transform simple noise into structured samples through a trained denoising
process. They are especially effective in high-dimensional domains where the data manifold
is hard to model explicitly and direct likelihood-based modelling is impractical, and now
underpin state-of-the-art image synthesis \citep{dhariwal2021,rombach2022}. Generation
is framed as reversing a stochastic noising process: data are progressively corrupted toward
a tractable reference, and a network learns to invert the corruption. Beyond photorealistic
image synthesis \citep{dhariwal2021,rombach2022}, the same recipe has been adapted to
structured scientific objects such as molecules and materials \citep{miller2024} and
proteins \citep{huguet2024}; but it has practical costs. Sampling typically needs many
iterative denoising steps, and the stochastic
reverse process is expensive when the generated object is not a simple Euclidean vector but a
structured trajectory, graph, or matrix-valued object.

Conditional flow matching keeps the strengths of diffusion-style modelling while
simplifying training and sampling \citep{lipman2023,liu2023,albergo2023}. Instead of
denoising through a long stochastic reverse
chain, flow matching learns a time-dependent vector field that transports samples from a
simple source distribution to the target. It generalises \emph{continuous normalizing
flows} (models that reshape a simple distribution into a complex one by integrating a
learned ordinary differential equation \citep{chen2018neuralode}) but replaces their
expensive likelihood-based training with a simple regression objective. In the conditional
setting this transport can be
guided by context, labels, or partial trajectories, which makes it attractive for
forecasting: the model can generate an entire future trajectory jointly, avoiding
autoregressive error accumulation and long sampling chains. We build on this to model
time-varying network dynamics: rather than generating raw graph signals, we infer structured
TVGL precision-matrix trajectories and train a conditional flow to synthesise or forecast
their evolution.

Trajectories of precision matrices lie on a Riemannian manifold rather than in Euclidean
space, which makes flow matching costly: it requires minimising along geodesics rather than
straight-line interpolants. Diffeomorphic flow matching bypasses this by mapping the manifold
through a global diffeomorphic chart, so that ordinary Euclidean flow-matching machinery can
be used. We extend this idea from a single matrix to a \emph{trajectory} of precision
matrices, and show that the log-Euclidean chart is an exact diffeomorphism on the product
manifold. This yields an efficient paradigm, which we call TVGL--CFM: a generative model for
time-varying dynamical graphs, estimated directly from multivariate time series.

Two capabilities are valuable once such trajectories can be obtained. The first is
\textbf{generation}: sampling new, realistic, class-conditional trajectories. This is
useful when real recordings are scarce, as they often are in clinical neuroimaging (one
can augment a small dataset with synthetic trajectories); when the raw data are sensitive
and only synthetic surrogates can be shared; and when one wants to study how network
structure varies across a population by drawing many trajectories per class
\citep{collas2025,marti2020}. The second is
\textbf{forecasting}: given an observed prefix of the trajectory, predicting how it
continues. This supports anticipation and decision-making, for example forecasting how a
brain network will evolve in the windows following a stimulation pulse, where connectivity
is increasingly used to target and interpret neuromodulation \citep{fox2014}, or predicting
what a market's dependence structure will look like over the coming window. Recent work on \emph{static} connectivity
generation, \textsc{DiffeoCFM} \citep{collas2025}, shows that Riemannian flow matching on
matrix manifolds becomes ordinary Euclidean flow matching once the manifold is mapped
through a global diffeomorphism, giving fast training and manifold-valid samples. We build on this geometric insight and
extend it in two directions that, to our knowledge, have not been combined before: from a
\emph{single} matrix to a whole \emph{trajectory} of matrices evolving over time, and from
\emph{generation} to \emph{forecasting} a trajectory's future from its past.

\begin{contributionbox}
\begin{itemize}[leftmargin=1.2em,itemsep=0.45em,topsep=0.4em]

  \item \textbf{Flow matching over dynamic networks without requiring a pre-specified graph} We introduce a generative flow-matching model that needs no given
  underlying graph: we recover the latent dynamic network as a trajectory of time-varying
  sparse precision (inverse-covariance) matrices with TVGL, and learn to generate and
  forecast these trajectories directly, rather than modelling raw signals and estimating
  connectivity afterwards.

  \item \textbf{A transformer based, non-autoregressive, flow-matching model on the SPD manifold.} A global
  log-Euclidean chart flattens an entire precision trajectory into a Euclidean coordinate,
  where a single transformer velocity field, sampled non-autoregressively over the whole
  sequence, drives one conditional flow-matching model. Decoding through the chart returns
  guaranteed-SPD matrices with no post-hoc projection.

  \item \textbf{State of the art across domains.} The same model both
  \emph{generates} class-conditional trajectories $p(\Th_{1:T}\mid y)$ and \emph{forecasts}
  continuations $p(\Th_{K+1:T}\mid \Th_{1:K},y)$. On EEG motor-imagery, chaotic dynamical
  systems, and gene-expression data it outperforms raw-signal generative baselines on all
  tasks.

\end{itemize}
\end{contributionbox}
\section{Background}
\subsection{SPD matrices and the log-Euclidean diffeomorphism}
Let $\Spd=\{\Sigma\in\R^{p\times p}\mid \Sigma^\top=\Sigma,\ \Sigma\succ 0\}$. The
matrix logarithm composed with half-vectorisation,
\begin{equation}
\phi(\Sigma) \;=\; \veclt\!\big(\logm \Sigma\big),
\qquad
\phi^{-1}(\eta) \;=\; \expm\!\big(\veclt^{-1}(\eta)\big),
\label{eq:diffeo}
\end{equation}
is a global diffeomorphism from $\Spd$ onto $\R^{p(p+1)/2}$, where $\veclt$ stacks the
lower-triangular entries with off-diagonal terms scaled by $\sqrt{2}$
\citep{arsigny2007,pennec2006} and $\eta\in\R^{p(p+1)/2}$ denotes a vectorised
log-precision coordinate (the image of an SPD matrix under $\phi$). Declaring $\phi$ to be
distance-preserving (that is, measuring the distance between two SPD matrices by the
ordinary Euclidean distance between their images $\phi(\Sigma)$ and $\phi(\Sigma')$) defines
the \emph{log-Euclidean} metric on $\Spd$; equivalently, the Euclidean inner product on
$\R^{p(p+1)/2}$ is \emph{pulled back} through $\phi$ to a Riemannian metric on $\Spd$. This
is one of several Riemannian structures proposed for SPD
and correlation matrices, including the affine-invariant metric \citep{skovgaard1984,pennec2006intrinsic},
the Cholesky-based geometries \citep{lin2019}, and the stratified constructions of
\citet{thanwerdas2022} and \citet{david2019}; it is the choice under which flow
matching reduces to a Euclidean problem. Such geometries also underpin classical
manifold statistics such as principal geodesic analysis \citep{fletcher2004}. Other global
charts on $\Spd$, such as the log-Cholesky map
\citep{lin2019}, could be substituted without changing the framework. Because $\Th=\Sigma^{-1}$ has reciprocal eigenvalues and
identical eigenvectors,
\begin{equation}
\phi(\Th) \;=\; -\,\phi(\Sigma),
\label{eq:negation}
\end{equation}
so flowing precision in this embedding is equivalent to flowing covariance up to a
sign, and discriminative information is preserved under the map.

\subsection{Conditional flow matching}
\label{sec:cfm-bg}

Flow matching learns a time-dependent velocity field $u_\theta(s,z)$ that transports
samples from a simple source distribution $p_0=\mathcal N(0,\mathbf I)$ (a standard
multivariate normal) to a target distribution $p_1$ along an ODE
$\dot z=u_\theta(s,z)$ \citep{lipman2023,liu2023,albergo2023}, where $s\in[0,1]$ is the
\emph{flow time}. Concretely, one draws a source sample $z_0\sim p_0$ and a data sample
$z_1\sim p_1$. With the linear conditional path $z_s=(1-s)z_0+s\,z_1$ and target velocity $z_1-z_0$,
the CFM loss is
\begin{equation}
\mathcal L_{\mathrm{CFM}}(\theta)
=\mathbb{E}_{s,\,z_0\sim p_0,\,z_1\sim p_1}
\norm{u_\theta\big(s,(1-s)z_0+s\,z_1\big)-(z_1-z_0)}_2^2 .
\label{eq:cfm}
\end{equation}
Minibatch optimal-transport couplings \citep{tong2024} can further straighten these paths;
we return to this in Section~\ref{sec:warmprior}. A practical overview of the objective and
its variants is given by \citet{lipman2024guide}.
In our setting, the target is not a Euclidean vector but a trajectory of SPD precision
matrices. The next subsection explains why the Euclidean objective above is still the
correct training objective after applying a global diffeomorphic chart.

\subsection{Riemannian diffeomorphic flow matching}
\label{sec:riemannian-diffeomorphic-fm}

Riemannian flow matching extends the same idea to data supported on a manifold
$\mathcal M$, where velocities live in tangent spaces and distances are measured by a
Riemannian metric \citep{chen2024}. It sits within a broader effort to lift generative
models onto manifolds, including Riemannian score-based diffusion \citep{debortoli2022},
mixtures of Riemannian diffusion processes \citep{jo2023}, data-manifold metric flow
matching \citep{kapusniak2024}, and applications to materials \citep{miller2024}, proteins
\citep{huguet2024}, robot policies \citep{ding2025}, and Lie-group densities
\citep{falorsi2019}. For SPD data specifically, SPD-DDPM \citep{li2024spdddpm} and wrapped
Gaussians \citep{desurrel2025} operate intrinsically but at higher cost.
Directly applying Riemannian flow matching to SPD precision matrices
would require manifold-valued paths and Riemannian vector fields. Diffeomorphic flow
matching avoids this by using a global diffeomorphism
$\phi:\mathcal M\to E$ into a Euclidean space.

Given manifold samples $x_0,x_1\in\mathcal M$, define $z_0=\phi(x_0)$ and
$z_1=\phi(x_1)$. The diffeomorphic path is the pullback of a straight Euclidean path:
\begin{equation}
    z_s=(1-s)z_0+s z_1,
    \qquad
    x_s=\phi^{-1}(z_s).
\label{eq:diffeomorphic-path}
\end{equation}
Under the pullback metric $\phi^\ast g_E$, the Riemannian CFM loss on $\mathcal M$
is exactly the Euclidean CFM loss in the embedded coordinates $z=\phi(x)$
\citep{collas2025}. Thus one may train the vector field in $E$, integrate the Euclidean
ODE, and map back through $\phi^{-1}$ to obtain valid manifold-valued samples.

For TVGL--CFM, $\mathcal M$ is the product SPD trajectory space
$\Mtraj=(\mathcal S_{++}^{p})^T$, and $\phi$ is the log-Euclidean embedding applied
window-wise. We therefore operate entirely in the embedded space $E$, while decoding
through $\phi^{-1}$ guarantees SPD precision matrices. We reserve $s$ for flow time
throughout, keeping it distinct from the within-trajectory window time $\tau$ introduced
in Section~\ref{sec:method}.
\subsection{Data-dependent couplings and warm-start sources}
Standard flow matching pairs each data sample with an independent draw from a simple
source distribution. Choosing this pairing more carefully; for instance through
optimal-transport couplings \citep{tong2024}, which are grounded in displacement
interpolation \citep{mccann1997,villani2009} can shorten transport and speed up training.
In conditional forecasting, however, the observed history already
contains information about the future. Recent trajectory-simulation work exploits this by
constructing a source sample from the observed prefix; (for example, a noisy continuation
of the recent motion) the flow transports from an informed prior rather than from
uninformative Gaussian noise \citep{bennema2026stflow}. We adapt this principle to dynamic
network forecasting: the state is not a particle position but the log-Euclidean embedding
of a whole precision graph. The warm-start source is therefore a noisy random walk through
\emph{graph-state space}, conditioned on the observed TVGL history.

\subsection{Time-varying graphical lasso}
\label{sec:tvgl-bg}
Given a multivariate signal windowed into $T$ slices with empirical covariances
$S_1,\dots,S_T$ (each from $n$ samples), TVGL \citep{hallac2017} estimates a chain of
sparse precision matrices $\Th=(\Th_1,\dots,\Th_T)$ by solving
\begin{equation}
\min_{\Th_1,\dots,\Th_T\in\Spd}\;
\sum_{i=1}^{T}\Big(-\log\det\Th_i+\tr(S_i\Th_i)\Big)
\;+\;\lambda\sum_{i=1}^{T}\norm{\Th_i}_{od,1}
\;+\;\beta\sum_{i=2}^{T}\psi\big(\Th_i-\Th_{i-1}\big),
\label{eq:tvgl}
\end{equation}
where $\norm{\cdot}_{od,1}$ is the off-diagonal $\ell_1$ norm (sparsity), and $\psi$ is
a convex \emph{evolutionary penalty} controlling how the network may change between
adjacent timestamps. The choice of $\psi$ in \eqref{eq:tvgl} is a modelling decision:
following \citet{hallac2017} we consider three options, each encoding a different
assumption about how the network reconfigures from one window to the next. Writing
$D=\Th_i-\Th_{i-1}$ for the between-window change, these are (i) an \emph{element-wise
$\ell_1$} penalty $\psi(D)=\norm{D}_1$, under which only a few edges change at a time;
(ii) a \emph{Laplacian} (squared Frobenius) penalty $\psi(D)=\norm{D}_F^2$, which forces
the network to vary smoothly; and (iii) a \emph{column group-lasso} penalty
$\psi(D)=\sum_j\norm{D_{\cdot j}}_2$, which permits occasional global restructuring of
whole nodes. These are exactly the three penalties whose proximal operators appear later
in \eqref{eq:prox}. TVGL generalises static and jointly-estimated
sparse graphical models \citep{friedman2008,danaher2014,mohan2014} to an explicitly
temporal chain. Its precursors on time-varying network inference include kernel smoothing
\citep{zhou2010}, $\ell_1$-fused penalties \citep{kolar2010}, and applications to
biological and brain networks \citep{ahmed2009,monti2014}; the underlying conditional-%
independence semantics are standard in the graphical-model literature
\citep{lauritzen1996,koller2009,rue2005}, and closely related structured estimators appear
in \citet{wytock2013,hallac2015,banerjee2008,yuan2007}.

\section{Method: TVGL--CFM}
\label{sec:method}

\begin{figure*}[t]
    \centering
    \includegraphics[width=\textwidth]{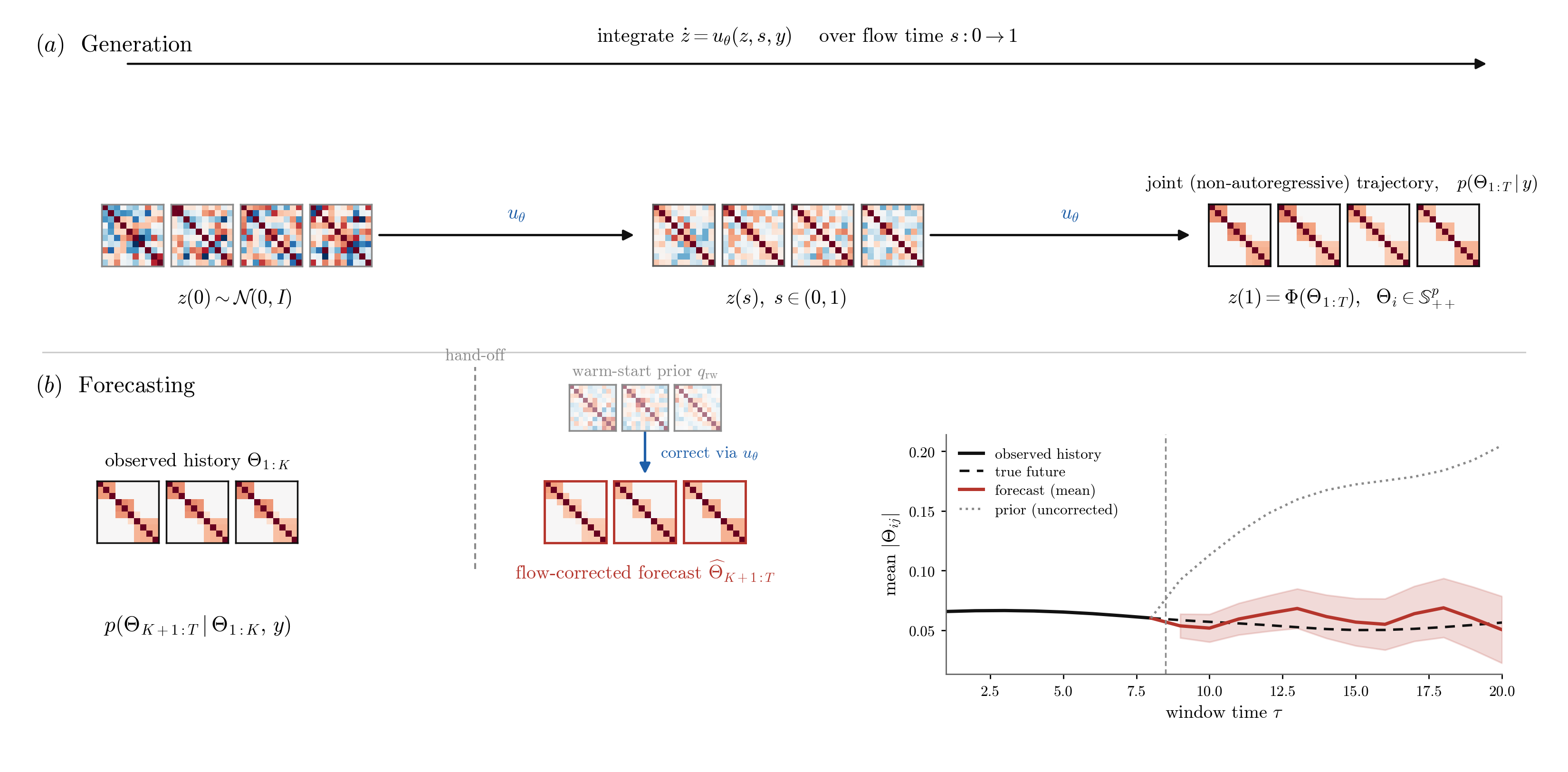}
    \caption{
    \textbf{TVGL-CFM generation and forecasting workflows.}
    \textbf{(a) Generation.} A class-conditioned conditional flow matching model transports a simple source trajectory
    \(z(0)\sim\mathcal{N}(0,I)\) to a joint TVGL precision trajectory by integrating the learned vector field
    \(\dot z = u_\theta(z,s,y)\) over flow time \(s\in[0,1]\). The terminal state is decoded as
    \(z(1)=\Phi(\Theta_{1:T})\), yielding a non-autoregressive sample from the conditional trajectory distribution
    \(p(\Theta_{1:T}\mid y)\), where each \(\Theta_i\in\mathbb{S}^{p}_{++}\).
    \textbf{(b) Forecasting.} Given an observed TVGL history \(\Theta_{1:K}\), the model constructs a warm-start future
    prior \(q_{\mathrm{rw}}\) and flow-corrects it using the learned vector field to obtain a forecast
    \(\widehat{\Theta}_{K+1:T}\). The right panel illustrates the resulting dynamic forecast summary: observed history,
    true future, prior trajectory, and the TVGL-CFM predictive mean with uncertainty over the forecast horizon. The
    vertical dashed line marks the handoff between observed and predicted windows.
    }
    \label{fig:tvglcfm_generation_forecasting}
\end{figure*}

\begin{figure*}[t]
    \centering
    \includegraphics[width=\textwidth]{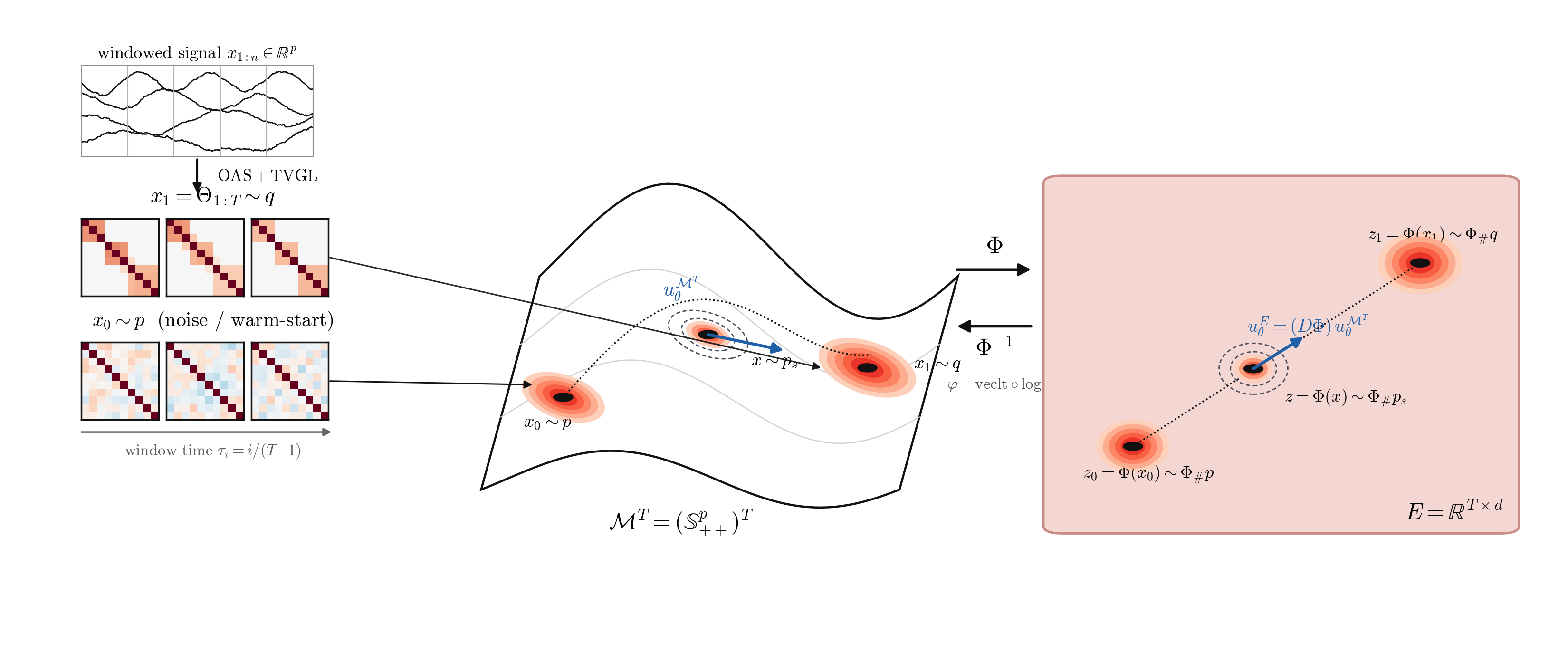}
    \caption{
    \textbf{Flow matching on TVGL precision-matrix trajectories via log-Euclidean embedding.}
    A multivariate windowed signal \(x_{1:n}\in\mathbb{R}^{p}\) is converted into a time-varying graphical trajectory
    \(\Theta_{1:T}\) using OAS covariance estimation followed by TVGL, defining samples from the target trajectory
    distribution \(q\) on the product SPD manifold
    \(\mathcal{M}^{T}=(\mathbb{S}^{p}_{++})^{T}\). The model transports a simple source trajectory
    \(x_0\sim p\), implemented either as noise or a warm-start prior, toward a target trajectory \(x_1\sim q\)
    using a learned Riemannian vector field \(u_\theta^{\mathcal{M}^T}\). For efficient learning, trajectories are mapped
    through the log-Euclidean chart \(\Phi\), which vectorizes matrix logarithms into the Euclidean space
    \(E=\mathbb{R}^{T\times d}\). In this embedded space, the pushforward distributions
    \(z_0=\Phi(x_0)\sim\Phi_{\#}p\), \(z=\Phi(x)\sim\Phi_{\#}p_s\), and \(z_1=\Phi(x_1)\sim\Phi_{\#}q\) are connected by
    the Euclidean vector field \(u_\theta^{E}=(D\Phi)u_\theta^{\mathcal{M}^{T}}\), enabling conditional flow matching
    while preserving the SPD structure after decoding with \(\Phi^{-1}\).
    }
    \label{fig:tvglcfm_manifold_embedding}
\end{figure*}

\paragraph{The generative model, and its two time variables.}
The generator is a \emph{single transformer} that reads a whole precision trajectory as a
sequence of tokens (one token per window) and outputs the flow's velocity field; it is
defined in full in \eqref{eq:ufield} of Section~\ref{sec:flow}. It is entirely separate
from the TVGL stage described above: TVGL is a convex optimisation that \emph{produces} the
target trajectories, whereas the transformer is the neural model \emph{trained on} those
targets. This model reads two distinct notions of time. \emph{Flow time} $s\in[0,1]$ is the
CFM transport variable: $s=0$ is the source, $s=1$ is the target, and sampling integrates
the ODE in $s$. \emph{Window time} $\tau_i:=i/(T-1)$ indexes the fixed TVGL windows
$i=1,\ldots,T$ and tells the transformer where each token lies along the trajectory. The
model receives both through separate Fourier features $\mathrm{FF}_s$ and
$\mathrm{FF}_\tau$ (see \eqref{eq:ufield}). When needed, we write $z_{s,i}$ for flow time
$s$ at window $i$; in particular, $\hat z_{1,i}$ is the predicted clean endpoint at
window $i$.

\paragraph{Problem setup.}
A dataset is a collection of $p$-channel sequences with optional class labels $y$. Each
sequence is windowed into $T$ slices and reduced, per window, to a TVGL precision matrix,
giving a per-sequence trajectory $\Th_{1:T}\in(\Spd)^T$. We address two tasks on these
trajectories: \emph{generation} of a full trajectory given a class,
$p(\Th_{1:T}\mid y)$; and \emph{forecasting} of the future given an observed prefix of
length $K$, $p(\Th_{K+1:T}\mid\Th_{1:K},y)$. Both share the target-construction and
embedding stages (Figure~\ref{fig:tvglcfm_manifold_embedding}) and differ only in the conditioning of the
flow.

\subsection{Windowed shrinkage covariance}
Each sequence's $p$-channel signal is $z$-scored per channel over time and split into
$T$ contiguous windows of length $n$. Because $n$ may be comparable to $p$, the raw
sample covariance is poorly conditioned; we use the oracle-approximating-shrinkage (OAS)
estimator \citep{chen2010}, which for a window with sample covariance $S$ returns
\begin{equation}
\widehat S=(1-\rho)\,S+\rho\,\hat\mu I,
\quad \hat\mu=\tfrac{1}{p}\tr S,
\quad \rho=\min\!\Big(1,\ \frac{\alpha+\hat\mu^2}{(n+1)\big(\alpha-\hat\mu^2/p\big)}\Big),
\quad \alpha=\tfrac{1}{p^2}\tr(S^2).
\label{eq:oas}
\end{equation}
All windows are computed in a single vectorised pass over (sequence, window). We divide
every $\widehat S$ by the mean training diagonal so that covariances are $O(1)$ and the
TVGL $\lambda$ is interpretable.

\subsection{Batched TVGL targets}
\label{sec:tvgl}
We solve \eqref{eq:tvgl} for all sequences and windows simultaneously with ADMM
\citep{boyd2011,boyd2004}. Introducing consensus copies $Z_0,Z_1,Z_2$ for the sparsity and
the two temporal roles of each slice, the updates have closed forms in terms of proximal
operators \citep{parikh2014}. Consider a single window (we suppress the window index $i$).
The likelihood (proximal) update first forms the symmetric matrix
$M=\tfrac12(A+A^\top)-\eta S$, where $A$ is the averaged combination of the consensus and
dual variables for that slice (Algorithm~\ref{alg:tvgl}, line~3), $S$ is its shrinkage
covariance, and $\eta=n/(\rho c_i)>0$ is the per-slice weight defined on line~1 ($n$
samples per window, penalty parameter $\rho$, and $c_i$ the number of active roles of slice
$i$). Taking the eigendecomposition $M=Q\diag(d)Q^\top$, with orthogonal eigenvector matrix
$Q$ and eigenvalue vector $d\in\R^{p}$, the update has the closed form
\begin{equation}
\Th \;=\; Q\,\diag\!\Big(\tfrac{1}{2}\big(d+\sqrt{d^2+4\eta}\big)\Big)\,Q^\top
\;\succ\;0,
\label{eq:thupdate}
\end{equation}
which is guaranteed SPD because every transformed eigenvalue
$\tfrac12\big(d_j+\sqrt{d_j^2+4\eta}\big)$ is strictly positive. The sparsity update is the
off-diagonal soft-threshold $\softthr_{\lambda/\rho}$ (diagonal preserved). The temporal
update is the proximal operator of the \emph{same} evolutionary penalty $\psi$ chosen in
\eqref{eq:tvgl} (Section~\ref{sec:tvgl-bg}), applied to the consensus difference; for the
three choices of $\psi$ it is
\begin{equation}
\mathrm{prox}_{\kappa\psi}(D)=
\begin{cases}
\softthr_{\kappa}(D), & \psi=\ell_1\ \text{(elementwise)},\\[2pt]
D/(1+2\kappa), & \psi=\ell_2^2\ \text{(Laplacian)},\\[2pt]
\big[\,\max(0,1-\kappa/\norm{[D]_j}_2)\,[D]_j\,\big]_j, & \psi=\text{group }\ell_2\ \text{(per column $j$)}.
\end{cases}
\label{eq:prox}
\end{equation}
Algorithm~\ref{alg:tvgl} states the procedure; the output $\Th_{n,i}$ is a sparse SPD
precision trajectory per sequence.

\begin{algorithm}[t]
\caption{Batched TVGL via ADMM (all sequences/windows)}
\label{alg:tvgl}
\begin{algorithmic}[1]
\Require shrinkage covariances $S_{n,i}$; $\lambda,\beta,\rho$; samples/window $n$; iterations $K_{\mathrm{admm}}$
\State init $\Th,Z_0,Z_1,Z_2\!\leftarrow\! I$;\ \ $U_0,U_1,U_2\!\leftarrow\! 0$;\ \ $c_i\!\leftarrow\!1+\![i<T]\!+\![i>0]$;\ \ $\eta_i\!\leftarrow\! n/(\rho c_i)$
\For{$k=1$ to $K_{\mathrm{admm}}$}
  \State $A_i \leftarrow \big(Z_{0,i}-U_{0,i} + [i{<}T](Z_{1,i}-U_{1,i}) + [i{>}1](Z_{2,i}-U_{2,i})\big)/c_i$
  \State $\Th_i \leftarrow$ Eq.~\eqref{eq:thupdate} with $M_i=\tfrac12(A_i+A_i^\top)-\eta_i S_i$ \Comment{batched eigendecomposition}
  \State $Z_{0,i} \leftarrow \softthr_{\lambda/\rho}(\Th_i+U_{0,i})$ off-diagonal; diagonal $\leftarrow (\Th_i+U_{0,i})$
  \State $E_i \leftarrow \mathrm{prox}_{(2\beta/\rho)\psi}\!\big((\Th_{i}+U_{2,i})-(\Th_{i-1}+U_{1,i-1})\big)$ \Comment{Eq.~\eqref{eq:prox}}
  \State $Z_{1,i-1}\!\leftarrow\!\tfrac12(\Sigma_i-E_i)$,\ \ $Z_{2,i}\!\leftarrow\!\tfrac12(\Sigma_i+E_i)$ with $\Sigma_i=(\Th_{i-1}{+}U_{1,i-1}){+}(\Th_i{+}U_{2,i})$
  \State $U_{0}\!\mathrel{+}=\!\Th-Z_0$;\ \ $U_{1}\!\mathrel{+}=\!\Th-Z_1$;\ \ $U_{2}\!\mathrel{+}=\!\Th-Z_2$ \Comment{on active slices}
\EndFor
\State \Return $\tfrac12(\Th+\Th^\top)$
\end{algorithmic}
\end{algorithm}

\subsection{Embedding and standardisation}
\label{sec:embedding}

Each TVGL target is a trajectory of SPD precision matrices
$\Theta_{1:T}\in\Mtraj=(\mathcal S_{++}^{p})^T$. We embed each window using the
log-Euclidean diffeomorphism in \eqref{eq:diffeo} and apply it independently across
the trajectory:
\begin{equation}
    \Phi(\Theta_{1:T})
    =
    \big(\phi(\Theta_1),\ldots,\phi(\Theta_T)\big)
    \in E=\mathbb R^{T\times d},
    \qquad
    d=\frac{p(p+1)}{2}.
\label{eq:trajectory-embedding}
\end{equation}
Thus a whole precision trajectory becomes a Euclidean token sequence
$z_n\in\mathbb R^{T\times d}$, with one token per TVGL window.

The embedding is geometric, but the raw log-Euclidean coordinates are not equally
scaled. Diagonal log-precision coordinates are typically much larger than
off-diagonal edge coordinates, so training CFM directly on $z_n$ would make the
squared loss dominated by high-variance coordinates. We therefore standardise each
coordinate using training-set statistics $(\mu_k,\sigma_k)$:
\begin{equation}
    \widetilde z_{n,i,k}
    =
    \frac{z_{n,i,k}-\mu_k}{\sigma_k},
    \qquad
    i=1,\ldots,T,\quad k=1,\ldots,d .
\label{eq:embedding-standardisation}
\end{equation}
Equivalently, standardisation is an affine diffeomorphism
$\Psi:E\to E$, defined by
\begin{equation}
    \Psi(z)_{i,k}=\frac{z_{i,k}-\mu_k}{\sigma_k},
    \qquad
    \Psi^{-1}(\widetilde z)_{i,k}
    =
    \sigma_k\widetilde z_{i,k}+\mu_k .
\label{eq:standardisation-map}
\end{equation}
The actual coordinates used by the flow are therefore given by the composite map
\begin{equation}
    \widetilde{\Phi}
    =
    \Psi\circ\Phi,
    \qquad
    \widetilde z
    =
    \widetilde{\Phi}(\Theta_{1:T}) .
\label{eq:composite-embedding}
\end{equation}

The geometric validity of this embedding follows from the product structure of the
trajectory manifold.

\begin{proposition}[Product diffeomorphism and pullback metric]
\label{prop:prod-diffeo}
$\Phi:\Mtraj\to E$ is a global diffeomorphism, and the product log-Euclidean
metric \eqref{eq:product-le-metric} is the pullback $\Phi^{*}g_{E}$ of the
standard Euclidean metric on $E$.
\end{proposition}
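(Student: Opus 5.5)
The plan is to reduce everything to the single-window statement about $\phi$ recalled in \eqref{eq:diffeo} and then take products. First I will check that $\phi=\veclt\circ\logm$ is a global diffeomorphism $\Spd\to\R^{d}$ with $d=p(p+1)/2$. The matrix exponential restricted to the symmetric matrices $\Sym$ is a smooth bijection onto $\Spd$, with inverse $\logm$. Bijectivity follows from the spectral theorem: $\expm(Q\diag(\lambda)Q^\top)=Q\diag(e^{\lambda})Q^\top$, and the eigenvalues $e^{\lambda_j}>0$ are recovered uniquely together with the eigenspaces. Next, $\veclt:\Sym\to\R^{d}$ is a linear isomorphism. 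Because of the $\sqrt2$ scaling on the off-diagonal entries, it is in fact an isometry from $(\Sym,\langle A,B\rangle=\tr(AB))$ onto $(\R^{d},\langle\cdot,\cdot\rangle_2)$. Both claims are cited facts \citep{arsigny2007,pennec2006}, which I would take as given.

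The only genuinely nontrivial point, and the step I expect to be the main obstacle, is that $\logm$ is smooth on $\Spd$. Equivalently, the differential of $\expm|_{\Sym}$ must be invertible everywhere. For this I would use the Daleckii--Krein formula. Write $X=Q\diag(\lambda)Q^\top$ and $H'=Q^\top HQ$. In the eigenbasis of $X$, $D\expm(X)[H]$ has entries $H'_{jk}\,(e^{\lambda_j}-e^{\lambda_k})/(\lambda_j-\lambda_k)$, with the divided difference read as $e^{\lambda_j}$ when $\lambda_j=\lambda_k$. Every such divided difference is strictly positive since $\exp$ is strictly increasing. The differential is therefore an invertible linear map on $\Sym$. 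The inverse function theorem then makes $\logm$ smooth, and $\phi$ and $\phi^{-1}$ are smooth mutually inverse maps. If this cannot simply be cited, it is the one place where I would write details.

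For the product, $\Phi=\phi\times\cdots\times\phi$ acts componentwise on $\Mtraj=(\Spd)^T$, which is an open subset of $(\Sym)^T$ and hence a smooth manifold. A product of diffeomorphisms is a diffeomorphism onto the product of the images, so $\Phi$ maps onto $(\R^{d})^T\cong\R^{T\times d}=E$. Its inverse is $\Phi^{-1}(\eta_1,\dots,\eta_T)=(\phi^{-1}(\eta_1),\dots,\phi^{-1}(\eta_T))$, smooth in each block, and $D\Phi$ is block-diagonal with invertible blocks $D\phi(\Th_i)$.

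For the metric, I will take \eqref{eq:product-le-metric} to be the sum over windows of the single-window log-Euclidean metrics,
\[
g_{\Th_{1:T}}\big((V_i)_i,(W_i)_i\big)=\sum_{i=1}^T\big\langle D\logm(\Th_i)[V_i],\,D\logm(\Th_i)[W_i]\big\rangle_F .
\]
The Euclidean metric on $E=\R^{T\times d}$ is the Frobenius inner product, which splits as a sum of per-row inner products. Since $D\Phi$ is block-diagonal, $\Phi^*g_E$ evaluated on $(V_i)_i,(W_i)_i$ equals $\sum_i\langle D\phi(\Th_i)[V_i],D\phi(\Th_i)[W_i]\rangle_2$. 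Because $\veclt$ is a linear isometry, each summand equals $\langle D\logm(\Th_i)[V_i],D\logm(\Th_i)[W_i]\rangle_F$, which matches $g$ term by term.

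As a corollary, geodesic distance under $g$ is $\|\Phi(\Th_{1:T})-\Phi(\Th'_{1:T})\|_F$. I would also remark that composing with the affine map $\Psi$ of \eqref{eq:standardisation-map} preserves the diffeomorphism property. The pulled-back metric then becomes the diagonally reweighted version with weights $\sigma_k^{-2}$, so the statement as posed concerns $\Phi$ rather than $\widetilde\Phi$.
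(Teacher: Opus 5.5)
Your proposal is correct and follows the paper's proof: $\Phi$ is the componentwise product of the single-window diffeomorphism $\phi$, and block-diagonality of $D\Phi$ splits $\Phi^{*}g_{E}$ into a sum of per-window pullbacks, each of which is the log-Euclidean metric. The only difference is that the paper asserts smoothness of the matrix logarithm and cites \citet{collas2025} for $\phi^{*}g_{E}$ being the log-Euclidean metric, whereas you justify these through the Daleckii--Krein formula and the $\veclt$ isometry; this adds rigour but does not change the route.
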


Standardisation simply composes this product diffeomorphism with an invertible affine
map.

\begin{proposition}[Standardised diffeomorphism]
\label{prop:standardisation}
Assume $\sigma_k>0$ for all $k=1,\ldots,d$. Then
$\widetilde{\Phi}:=\Psi\circ\Phi:\Mtraj\to E$ is a global diffeomorphism. Its
pullback metric is the per-coordinate rescaling
\begin{equation}
\label{eq:standardised-metric}
  (\widetilde{\Phi}^{*}g_{E})_{\Theta_{1:T}}(\xi,\eta)
  =
  \sum_{i=1}^{T}\sum_{k=1}^{d}
   \sigma_{k}^{-2}\,
   \bigl[D\phi(\Theta_{i})[\xi_{i}]\bigr]_{k}\,
   \bigl[D\phi(\Theta_{i})[\eta_{i}]\bigr]_{k},
\end{equation}
for tangent trajectories
$\xi,\eta\in T_{\Theta_{1:T}}\Mtraj$.
\end{proposition}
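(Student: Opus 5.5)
The proof composes Proposition~\ref{prop:prod-diffeo} with the elementary properties of the affine map $\Psi$ and then applies the chain rule to compute the pullback. First I would show that $\Psi:E\to E$ is a global diffeomorphism whenever every $\sigma_k>0$. By \eqref{eq:standardisation-map}, $\Psi$ is affine, $\Psi(z)=z\,\mathrm{diag}(\sigma)^{-1}-\mathbf 1\mu^\top\mathrm{diag}(\sigma)^{-1}$ acting row-wise. Its explicit inverse $\Psi^{-1}$ is also affine, and both maps are smooth. Hence $\Psi$ is a bijection of $E$ onto $E$ with smooth inverse. Its differential at every point is the constant linear map $D\Psi[v]_{i,k}=v_{i,k}/\sigma_k$, which is invertible because $\sigma_k>0$. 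Proposition~\ref{prop:prod-diffeo} says $\Phi:\Mtraj\to E$ is a global diffeomorphism, and a composition of global diffeomorphisms is one. So $\widetilde\Phi=\Psi\circ\Phi$ is a smooth bijection $\Mtraj\to E$ with smooth inverse $\Phi^{-1}\circ\Psi^{-1}$.

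For the metric, I would compute the pullback directly from its definition, $(\widetilde\Phi^*g_E)_{\Theta_{1:T}}(\xi,\eta)=\langle D\widetilde\Phi(\Theta_{1:T})[\xi],D\widetilde\Phi(\Theta_{1:T})[\eta]\rangle_E$. By the chain rule, $D\widetilde\Phi=D\Psi\circ D\Phi$. The product structure \eqref{eq:trajectory-embedding} gives $T_{\Theta_{1:T}}\Mtraj=\bigoplus_i T_{\Theta_i}\Spd$, with $D\Phi(\Theta_{1:T})[\xi]=(D\phi(\Theta_1)[\xi_1],\dots,D\phi(\Theta_T)[\xi_T])$, because $\Phi$ acts window-wise. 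Applying $D\Psi$ then divides entry $(i,k)$ by $\sigma_k$. The Euclidean inner product on $E=\R^{T\times d}$ is the Frobenius sum over $(i,k)$. Taking the product of the $(i,k)$ entries for $\xi$ and $\eta$ gives $\sigma_k^{-2}[D\phi(\Theta_i)[\xi_i]]_k[D\phi(\Theta_i)[\eta_i]]_k$, and summing over $i,k$ yields \eqref{eq:standardised-metric}.

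The only step needing care is the identification of the product tangent space and of $D\Phi$ as block-diagonal. This follows from the standard fact that the differential of a product map is the product of differentials. I would state it explicitly and not treat it as the main obstacle. I would also note two well-definedness points: the formula is symmetric, bilinear and positive definite, so it is a genuine Riemannian metric; and it equals $\Phi^*(\Psi^*g_E)$, where $\Psi^*g_E$ is the constant diagonal metric $\mathrm{diag}(\sigma^{-2})$ on $E$. The result is therefore a per-coordinate rescaling of the product log-Euclidean metric of Proposition~\ref{prop:prod-diffeo}, as claimed.
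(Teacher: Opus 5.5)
Your proposal is correct and takes essentially the same approach as the paper. In both arguments, $\Psi$ is an affine bijection with constant invertible Jacobian, so $\widetilde{\Phi}=\Psi\circ\Phi$ is a global diffeomorphism by Proposition~\ref{prop:prod-diffeo}; the chain rule $D\widetilde{\Phi}=D\Psi\circ D\Phi$ together with the block-diagonal form of $D\Phi$ then gives \eqref{eq:standardised-metric}, and your extra observation that this equals $\Phi^{*}(\Psi^{*}g_E)$ is correct but not needed.
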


Proofs of Propositions~\ref{prop:prod-diffeo} and
\ref{prop:standardisation} are given in Appendix~\ref{app:pullback}.
Consequently, Euclidean CFM in the standardised coordinates
$\widetilde z=\widetilde{\Phi}(\Theta_{1:T})$ is still a pullback Riemannian CFM
model on the original SPD trajectory manifold, and the training, sampling, and
discrete-integration equivalences of Section~\ref{sec:flow}
(Propositions~\ref{prop:traj-cfm}--\ref{prop:traj-rk}) therefore hold with
$\widetilde{\Phi}$ in place of $\Phi$.

After sampling, we invert the affine standardisation and then decode each window through
the inverse log-Euclidean map:
\begin{equation}
    \widehat{\Theta}_{i}
    =
    \phi^{-1}\!\left(\sigma\odot \widehat{\widetilde z}_{i}+\mu\right),
    \qquad
    i=1,\ldots,T .
\label{eq:decode-standardised}
\end{equation}
Because $\phi^{-1}$ is the matrix exponential after inverse half-vectorisation, every
decoded $\widehat{\Theta}_i$ is SPD by construction.

\begin{remark}[SPD validity versus sparsity]
\label{rem:spd-not-sparse}
Decoding through $\phi^{-1}$ guarantees that every generated $\widehat{\Theta}_i$ is
symmetric positive-definite, but it does not make the generated matrices exactly sparse:
the matrix exponential of a vectorised log-precision coordinate is generically dense. We
therefore distinguish three objects throughout. First, the sparse SPD \emph{TVGL targets}
used for training and as the oracle. Second, the \emph{dense, manifold-valid trajectories}
that the flow produces and on which all quantitative metrics are computed; these are SPD by
construction but not exactly sparse. Third, an optional \emph{sparsified readout} that
soft-thresholds the decoded off-diagonal entries at TVGL's $\lambda/\rho$ cutoff and then
re-projects to $\Spd$. Because the generic SPD projection after thresholding (enforcing a
minimum eigenvalue) can perturb entries that thresholding set to zero, the sparsified readout
does not in general preserve an exact zero support; it is an interpretability device rather
than a hard structural guarantee, and we report metrics on the unsparsified output.
\end{remark}

\subsection{Generation: joint-trajectory conditional flow}
\label{sec:flow}
A single transformer encoder $u_\theta$ \citep{vaswani2017} processes the whole standardised trajectory as a
length-$T$ token sequence. Each token (a window embedding) is summed with three
embeddings: the flow-time $s$, the window-time $\tau_i=i/(T{-}1)$, and the class $y$.
Continuous times use random Fourier features \citep{tancik2020} followed by an MLP; the
class uses a learned embedding. All models are trained with AdamW \citep{loshchilov2019}.
Writing $z\in\R^{T\times d}$,
\begin{equation}
u_\theta(z,s,y,\tau)=\mathrm{Dec}\Big(\mathrm{TX}\big(\mathrm{Enc}(z)
+\mathrm{FF}_s(s)+\mathrm{FF}_\tau(\tau)+\mathrm{Emb}_y(y)\big)\Big).
\label{eq:ufield}
\end{equation}

Because the standardised chart $\widetilde{\Phi}$ is a global diffeomorphism onto
$E$ with pullback metric $\widetilde{\Phi}^{*}g_{E}$
(Propositions~\ref{prop:prod-diffeo}--\ref{prop:standardisation}), the three
equivalences below hold with $\widetilde{\Phi}$ in place of $\Phi$; we state them
for $\Phi$ to lighten notation. The first shows that training the field in the
embedding is exactly Riemannian CFM on the trajectory manifold. Throughout, the
Euclidean field is the pushforward of a Riemannian field
$u^{\Mtraj}_{\theta}$ on $(\Mtraj,\Phi^{*}g_{E})$ through the differential
$D\Phi$ (Eq.~\eqref{eq:pullback-field-traj}).

\begin{proposition}[Trajectory CFM loss equivalence]
\label{prop:traj-cfm}
Let $p,q$ be class-conditional measures on $\Mtraj$ and write
$z_{0}\mid y\sim\Phi_{\#}p(\cdot\mid y)$, $z_{1}\mid y\sim\Phi_{\#}q(\cdot\mid y)$
in $E$. Define the Euclidean field
\begin{equation}
\label{eq:pullback-field-traj}
  u^{E}_{\theta}(s,z,y)
  \;:=\;D\Phi\bigl(\Phi^{-1}(z)\bigr)\bigl[u^{\Mtraj}_{\theta}(s,\Phi^{-1}(z),y)\bigr].
\end{equation}
Then the Riemannian CFM loss on $(\Mtraj,\Phi^{*}g_{E})$ equals
\begin{equation}
\label{eq:traj-cfm-loss}
  \Ltraj(\theta)
  \;=\;\E_{s,\,y,\,z_{0}\mid y,\,z_{1}\mid y}\bigl\|
    u^{E}_{\theta}\bigl(s,(1-s)z_{0}+s\,z_{1},y\bigr)-(z_{1}-z_{0})
  \bigr\|_{E}^{2}.
\end{equation}
The proof is given in Appendix~\ref{app:proof-traj-cfm}.
\end{proposition}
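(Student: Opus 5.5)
The plan is to write out the Riemannian CFM loss on $(\Mtraj,\Phi^{*}g_{E})$ explicitly and reduce it term by term to \eqref{eq:traj-cfm-loss}, using Proposition~\ref{prop:prod-diffeo} as the only geometric input. The Riemannian CFM objective has three ingredients: a conditional path $x_s$ on $\Mtraj$ joining $x_0\sim p(\cdot\mid y)$ to $x_1\sim q(\cdot\mid y)$, its velocity $\dot x_s\in T_{x_s}\Mtraj$, and the squared pullback norm $\|u^{\Mtraj}_\theta(s,x_s,y)-\dot x_s\|^2_{\Phi^*g_E,x_s}$, averaged over $s$, $y$ and the coupling. Following \eqref{eq:diffeomorphic-path}, I take the conditional path to be the pullback of the straight line, $x_s=\Phi^{-1}((1-s)z_0+sz_1)$ with $z_j=\Phi(x_j)$. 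This curve is well defined and smooth because $\Phi$ is a global diffeomorphism onto $E$ (Proposition~\ref{prop:prod-diffeo}), so $\Phi^{-1}$ is defined on the whole segment. It is also the $\Phi^*g_E$-geodesic, since $\Phi$ is an isometry onto flat $E$.

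The steps are as follows.

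\textbf{Step 1: the velocity.} Differentiate $\Phi(x_s)=(1-s)z_0+sz_1$ in $s$ to get $D\Phi(x_s)[\dot x_s]=z_1-z_0$.

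\textbf{Step 2: the norm.} By the definition of the pullback metric, $\|v\|^2_{\Phi^*g_E,x}=\|D\Phi(x)[v]\|_E^2$ for any $v\in T_x\Mtraj$. Apply this with $v=u^{\Mtraj}_\theta(s,x_s,y)-\dot x_s$ and use linearity of $D\Phi(x_s)$. The integrand becomes $\|D\Phi(x_s)[u^{\Mtraj}_\theta(s,x_s,y)]-(z_1-z_0)\|_E^2$.

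\textbf{Step 3: identify the Euclidean field.} Substitute $x_s=\Phi^{-1}(z_s)$ into \eqref{eq:pullback-field-traj}. This gives $D\Phi(x_s)[u^{\Mtraj}_\theta(s,x_s,y)]=u^E_\theta(s,z_s,y)$ with $z_s=(1-s)z_0+sz_1$.

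\textbf{Step 4: change variables in the expectation.} The law of $(\Phi(x_0),\Phi(x_1))$ given $y$ is the pushforward pair $(\Phi_\#p(\cdot\mid y),\Phi_\#q(\cdot\mid y))$. An expectation of a function of $(x_0,x_1)$ is therefore the expectation of the composed function of $(z_0,z_1)$ under the pushforward laws. This yields \eqref{eq:traj-cfm-loss}, with $s$ and $y$ integrated exactly as before.

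I expect the main obstacle to be bookkeeping rather than analysis. Everything in Step 2 must be justified on the product manifold, not a single factor. The tangent space decomposes as $T_{\Theta_{1:T}}\Mtraj=\bigoplus_i T_{\Theta_i}\Spd$, and $D\Phi$ acts block-diagonally as $\bigoplus_i D\phi(\Theta_i)$. The pullback of $g_E$ is thus the sum over windows of the single-window log-Euclidean metrics, which is the content of Proposition~\ref{prop:prod-diffeo} that I will invoke. A secondary point is that the identity holds pointwise in $(s,y,z_0,z_1)$, so no integrability beyond finiteness of either side is needed, and the equality is exact rather than merely holding up to a $\theta$-independent constant.
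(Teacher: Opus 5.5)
Your proposal is correct and follows essentially the same route as the paper's proof: you take the pullback of the straight line as the conditional path, derive the pointwise identity from the pullback norm and the linearity of $D\Phi$, and then change variables under the pushforward laws. The only difference is cosmetic. You obtain $D\Phi(x_s)[\dot x_s]=z_1-z_0$ by differentiating $\Phi(x_s)$ directly, whereas the paper computes $\dot\gamma(s)=D\Phi^{-1}(z_s)[z_1-z_0]$ and then applies the inverse-differential identity; the two are equivalent.
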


The next result establishes that integrating the learned Euclidean ODE and
decoding through $\Phi^{-1}$ reproduces the manifold-valued flow exactly.

\begin{proposition}[Trajectory ODE equivalence]
\label{prop:traj-ode}
Let $z:[0,1]\to E$ solve $\dot z(s)=u^{E}_{\theta}(s,z(s),y)$ with
$z(0)=z_{0}=\Phi(\Theta_{1:T}^{0})$. Then
$\Theta_{1:T}(s):=\Phi^{-1}(z(s))$ solves the Riemannian ODE
\begin{equation}
\label{eq:riem-ode-traj}
  \dot\Theta_{1:T}(s)=u^{\Mtraj}_{\theta}(s,\Theta_{1:T}(s),y),
  \qquad
  \Theta_{1:T}(0)=\Theta_{1:T}^{0},
\end{equation}
on $(\Mtraj,\Phi^{*}g_{E})$, and $\Theta_{i}(s)\in\Spd$ for every
$i\in\{1,\dots,T\}$ and every $s\in[0,1]$. The proof is given in
Appendix~\ref{app:proof-traj-ode}.
\end{proposition}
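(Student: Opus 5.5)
The argument is a chain-rule computation through the global chart $\Phi$ from Proposition~\ref{prop:prod-diffeo}. Let $z:[0,1]\to E$ be the solution of the Euclidean ODE and set $\Theta_{1:T}(s):=\Phi^{-1}(z(s))$. First, $\Phi^{-1}$ is smooth, being the windowwise matrix exponential composed with inverse half-vectorisation, so the composite $\Theta_{1:T}(\cdot)$ is a $C^1$ curve in $\Mtraj$. Its initial value is $\Theta_{1:T}(0)=\Phi^{-1}(\Phi(\Theta^0_{1:T}))=\Theta^0_{1:T}$.

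Next, differentiate with the chain rule:
\[
\dot\Theta_{1:T}(s)=D\Phi^{-1}(z(s))[\dot z(s)]=D\Phi^{-1}(z(s))\bigl[u^{E}_\theta(s,z(s),y)\bigr].
\]
Substitute the definition \eqref{eq:pullback-field-traj} of $u^E_\theta$, evaluated at $\Phi^{-1}(z(s))=\Theta_{1:T}(s)$. Differentiating $\Phi^{-1}\circ\Phi=\mathrm{id}$ gives $D\Phi^{-1}(\Phi(x))\circ D\Phi(x)=\mathrm{id}_{T_x\Mtraj}$. The two differentials therefore cancel, and we obtain $\dot\Theta_{1:T}(s)=u^{\Mtraj}_\theta(s,\Theta_{1:T}(s),y)$, which is \eqref{eq:riem-ode-traj}.

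The ODE \eqref{eq:riem-ode-traj} is a statement about tangent vectors and does not involve the metric. The metric $\Phi^*g_E$ only makes $\Phi$ an isometry, and this is what makes the correspondence meaningful. Uniqueness of solutions in either direction also transfers through the diffeomorphism, under the standing assumption that $u^E_\theta$ is locally Lipschitz, as a neural field is.

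SPD membership comes straight from the form of $\Phi^{-1}$. Each window is $\Theta_i(s)=\expm(\veclt^{-1}(z_i(s)))$. The argument $\veclt^{-1}(z_i(s))$ is a symmetric matrix, so it has a real eigendecomposition $Q\diag(\ell)Q^\top$. Hence $\Theta_i(s)=Q\diag(e^{\ell})Q^\top$, which is symmetric with strictly positive eigenvalues, so $\Theta_i(s)\in\Spd$ for every $i$ and every $s$.

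The only real subtlety is existence of $z$ on the whole interval $[0,1]$. The statement assumes $z$ solves the ODE on $[0,1]$, so I take existence as a hypothesis rather than proving global existence. I would remark that finite-time blow-up in $E$ is the only failure mode, and that there is no separate "leaving the manifold" failure, because $\Phi^{-1}$ is defined on all of $E$. This last point is exactly where global surjectivity of the chart from Proposition~\ref{prop:prod-diffeo} is used.
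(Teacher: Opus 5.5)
Your proposal is correct and follows essentially the same route as the paper: the chain rule through $\Phi^{-1}$, substitution of the pullback-field definition of $u^{E}_{\theta}$, and cancellation via $D\Phi^{-1}(\Phi(x))\circ D\Phi(x)=\mathrm{Id}_{T_x\Mtraj}$. The paper likewise gets the initial condition directly and SPD membership from the eigendecomposition of $\exp$ on symmetric matrices. Your extra remarks on regularity, on uniqueness transferring under a Lipschitz field, and on existence being taken as a hypothesis (with global surjectivity ruling out leaving the manifold) are sound points that the paper leaves implicit.
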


The equivalence also holds at the discrete level: the same explicit Runge--Kutta
scheme applied in $E$ or on $\Mtraj$ produces iterates related by $\Phi$, so the
practical RK4 sampler of Algorithm~\ref{alg:sample} inherits the guarantees above.

\begin{proposition}[Trajectory Runge--Kutta equivalence]
\label{prop:traj-rk}
Let an explicit Runge--Kutta scheme with $\nu$ stages, Butcher tableau
$(a_{ij},b_{i},c_{i})$, and step size $h$ be applied to the Riemannian ODE
\eqref{eq:riem-ode-traj} on $(\Mtraj,\Phi^{*}g_{E})$ and to the Euclidean ODE
$\dot z=u^{E}_{\theta}(s,z,y)$ on $E$, with initial conditions
$\Theta_{1:T}^{0}$ and $z^{0}=\Phi(\Theta_{1:T}^{0})$ respectively. Then
\begin{equation}
\label{eq:rk-equivalence-traj}
  \Theta_{1:T}^{\ell}\;=\;\Phi^{-1}(z^{\ell})
  \qquad\text{for all }\ell\in\N,
\end{equation}
and every iterate $\Theta_{1:T}^{\ell}$ is slice-wise SPD. The proof is given in
Appendix~\ref{app:proof-traj-rk}.
\end{proposition}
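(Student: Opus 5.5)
The plan is an induction on the step index $\ell$, with the stage index handled by an inner induction. Everything rests on one observation: an explicit Runge--Kutta scheme on $(\Mtraj,\Phi^{*}g_{E})$ only makes sense once we fix how stages and tangent vectors are combined. I will take the natural definition for a manifold with a global chart. Every stage point and every update is formed in the chart: the ``linear combination'' $\Theta\oplus h\sum_j a_{ij}k_j$ means $\Phi^{-1}\bigl(\Phi(\Theta)+h\sum_j a_{ij}\,D\Phi(\cdot)[k_j]\bigr)$, with tangent vectors transported by the chart. Because the metric is the pullback $\Phi^{*}g_E$, the chart is a global isometry (Proposition~\ref{prop:prod-diffeo}). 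It is therefore also the exponential chart, so this definition coincides with $\mathrm{Exp}$-based RK on this flat manifold, and it is exactly the construction in \citet{collas2025} for a single matrix.

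Concretely, I would write the manifold scheme as follows. The stages are $K_i=u^{\Mtraj}_{\theta}(s_\ell+c_ih,\Theta^{(i)},y)$ with $\Theta^{(i)}=\Phi^{-1}\bigl(\Phi(\Theta^{\ell})+h\sum_{j<i}a_{ij}D\Phi[K_j]\bigr)$, and the update is $\Theta^{\ell+1}=\Phi^{-1}\bigl(\Phi(\Theta^\ell)+h\sum_i b_iD\Phi[K_i]\bigr)$. The Euclidean scheme has stages $k_i=u^E_\theta(s_\ell+c_ih,z^{(i)},y)$ with $z^{(i)}=z^\ell+h\sum_{j<i}a_{ij}k_j$.

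For the inductive step, assume $\Theta^\ell=\Phi^{-1}(z^\ell)$; the base case $\ell=0$ holds by hypothesis. I then show by induction on $i$ that $\Theta^{(i)}=\Phi^{-1}(z^{(i)})$ and $D\Phi(\Theta^{(i)})[K_i]=k_i$. The second identity is immediate from the definition \eqref{eq:pullback-field-traj} of $u^E_\theta$ once the first holds. The first follows because the arguments of $\Phi^{-1}$ agree term by term, given the earlier stages. Summing with weights $b_i$ gives $\Theta^{\ell+1}=\Phi^{-1}(z^{\ell+1})$.

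Slice-wise SPD validity follows at once. $\Phi^{-1}$ acts window-wise as $\phi^{-1}=\exp\circ\veclt^{-1}$, whose image lies in $\Spd$ for every input in $\R^{d}$. Hence every $\Theta_i^\ell$ is SPD, and so is every stage point, with no step-size restriction.

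The main obstacle is justifying that the manifold RK scheme is the chart-based one. If one instead used a generic retraction or parallel-transport RK (e.g.\ Munthe-Kaas with the affine-invariant exponential), the iterates would not match. I would handle this by showing that under $\Phi^{*}g_E$ the Riemannian exponential is $\mathrm{Exp}_\Theta(\xi)=\Phi^{-1}(\Phi(\Theta)+D\Phi[\xi])$. This holds because $\Phi$ is an isometry onto flat $E$, so geodesics are images of straight lines. Parallel transport is likewise $D\Phi^{-1}\circ D\Phi$. Under these identities, any intrinsic RK formulation built from $\mathrm{Exp}$ and parallel transport reduces to the chart form above.
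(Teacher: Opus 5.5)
Your proof is correct, but it is organised differently from the paper's. The paper first notes that $D\Phi$, $D\Phi^{-1}$, the exponential map and parallel transport on $(\Mtraj,\Phi^{*}g_{E})$ are all block-diagonal over the window index. It concludes that the Riemannian RK scheme runs factor by factor on $\Spd$, invokes Proposition~3 of \citet{collas2025} on each factor, and stacks the results.

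You never split into factors. You write the Exp/parallel-transport RK scheme directly in the global chart $\Phi$, using the same formulas the paper lists, $\expmap_{\Theta}(\zeta)=\Phi^{-1}(\Phi(\Theta)+D\Phi(\Theta)[\zeta])$ and $\mathrm{PT}_{x\to y}=D\Phi(y)^{-1}D\Phi(x)$. You then run a double induction over steps and stages, and the key inner identity $D\Phi(\Theta^{(i)})[K_i]=k_i$ falls straight out of the definition of $u^{E}_{\theta}$.

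Your route is self-contained. It uses only that $\Phi$ is a global diffeomorphism and that the metric is its pullback, so the product structure enters only in the slice-wise SPD conclusion.

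It also avoids a point the paper glosses over. The $i$-th block of $u^{\Mtraj}_{\theta}$ depends on all windows, because the transformer attends across tokens. So the per-factor problem is not a standalone single-matrix ODE. Applying the single-matrix result factor by factor implicitly requires that the other factors' stage points already agree, and your joint stage induction establishes exactly that. The paper's decoupling claim holds for the geometric operations but not for the dynamics.

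The paper's route is shorter and reuses an existing result. You also state openly that the result depends on taking the Exp/parallel-transport (equivalently chart-based) definition of Riemannian RK, rather than, say, an affine-invariant retraction. The paper relies on that hypothesis but leaves it implicit.
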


\paragraph{TVGL temporal regulariser.}
For a path point $z_s$ and predicted velocity $u_\theta$, the implied clean endpoint
is $\hat z_1=z_s+(1-s)\,u_\theta$, i.e.\ the model's flow-time-$s{=}1$ extrapolation
of the current sample. Writing $\hat z_{1,i}$ for the $i$-th window slice of this
clean endpoint (first index: flow time $s=1$; second index: window time
$\tau_i$), we bias the generator toward the chosen evolutionary pattern by penalising
the corresponding $\psi$ on consecutive windows of $\hat z_1$:
\begin{equation}
\mathcal L(\theta)=\mathcal L_{\mathrm{CFM}}(\theta)
+\lambda_{\mathrm{temp}}\;\mathbb{E}\!\left[\frac{1}{T-1}\sum_{i=2}^{T}
\psi_E\big(\hat z_{1,i}-\hat z_{1,i-1}\big)\right],
\label{eq:loss}
\end{equation}
with $\psi_E(\cdot)=\norm{\cdot}_1$ for the $\ell_1$ penalty and
$\psi_E(\cdot)=\norm{\cdot}_2^2$ for the smooth/group penalties. The sum is over
\emph{window time} at fixed flow time $s=1$; the CFM term $\mathcal L_{\mathrm{CFM}}$
itself averages over flow time. The main temporal structure is carried by the targets;
\eqref{eq:loss} is a soft prior consistent with TVGL.
Algorithms~\ref{alg:train}--\ref{alg:sample} give training and sampling.

\begin{proposition}[Temporal regulariser as Riemannian smoothness]
\label{prop:temporal}
Let $\hatTheta_{i}:=\tildePhi^{-1}(\hat z_{1,i})\in\Spd$ for $i=1,\dots,T$ be
the decoded predicted-endpoint trajectory.
\begin{enumerate}
\item[(i)] With $\psi_{E}(\cdot)=\|\cdot\|_{2}^{2}$ (used in the smooth and
group-lasso TVGL settings),
\begin{equation}
\label{eq:temp-l2}
  \frac{1}{T-1}\sum_{i=2}^{T}\|\hat z_{1,i}-\hat z_{1,i-1}\|_{2}^{2}
  \;=\;\frac{1}{T-1}\sum_{i=2}^{T}\sum_{k=1}^{d}\sigma_{k}^{-2}
  \bigl(\phi(\hatTheta_{i})_{k}-\phi(\hatTheta_{i-1})_{k}\bigr)^{2},
\end{equation}
which is, up to per-coordinate scaling, the average squared log-Euclidean
distance between consecutive predicted matrices.
\item[(ii)] With $\psi_{E}(\cdot)=\|\cdot\|_{1}$ (used in the $\ell_{1}$ TVGL
setting),
\begin{equation}
\label{eq:temp-l1}
  \sum_{i=2}^{T}\|\hat z_{1,i}-\hat z_{1,i-1}\|_{1}
  \;=\;\sum_{i=2}^{T}\sum_{k=1}^{d}\sigma_{k}^{-1}
  \bigl|\phi(\hatTheta_{i})_{k}-\phi(\hatTheta_{i-1})_{k}\bigr|,
\end{equation}
i.e.\ a weighted $\ell_{1}$ norm of $\veclt$-vectorised
\emph{log-Euclidean} increments
$\log\hatTheta_{i}-\log\hatTheta_{i-1}$, not of raw precision increments
$\hatTheta_{i}-\hatTheta_{i-1}$.
\end{enumerate}
\end{proposition}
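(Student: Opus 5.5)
The plan is to use the fact that the standardised chart $\tildePhi=\Psi\circ\Phi$ is a bijection onto $E$ (Proposition~\ref{prop:standardisation}). This lets us write each slice of the predicted endpoint in terms of the decoded matrices, and then substitute directly into both penalties. The one caveat is that $\hatTheta_i:=\tildePhi^{-1}(\hat z_{1,i})$ is written slice-wise. Since $\Psi$ acts coordinatewise with the same $(\mu_k,\sigma_k)$ at every window $i$, I read it as the per-window map $\Psi_i(\phi(\Theta))_k=(\phi(\Theta)_k-\mu_k)/\sigma_k$. Its inverse gives
\[
\hatTheta_i=\phi^{-1}\bigl(\sigma\odot\hat z_{1,i}+\mu\bigr),
\]
as in \eqref{eq:decode-standardised}. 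Because $\phi$ is a global diffeomorphism onto $\R^{d}$, it inverts this to give $\phi(\hatTheta_i)=\sigma\odot\hat z_{1,i}+\mu$, that is, $\hat z_{1,i,k}=(\phi(\hatTheta_i)_k-\mu_k)/\sigma_k$ for every $i,k$. This step uses the full bijectivity of $\phi$ onto $\R^d$, not just injectivity. Every $\hat z_{1,i}$ produced by the network is an arbitrary vector in $\R^d$, so it must lie in the image for $\hatTheta_i\in\Spd$ to be defined.

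The key observation is that the shift $\mu_k$ is shared across windows, so it cancels in consecutive differences:
\[
\hat z_{1,i,k}-\hat z_{1,i-1,k}=\sigma_k^{-1}\bigl(\phi(\hatTheta_i)_k-\phi(\hatTheta_{i-1})_k\bigr).
\]
For part (i), I square this identity and sum over $k$ and $i$, which yields \eqref{eq:temp-l2} directly. The interpretation follows because $\sum_k(\phi(\hatTheta_i)_k-\phi(\hatTheta_{i-1})_k)^2=\|\phi(\hatTheta_i)-\phi(\hatTheta_{i-1})\|_2^2=\dLE(\hatTheta_i,\hatTheta_{i-1})^2$ by the definition of the log-Euclidean metric as pullback through $\phi$ (Proposition~\ref{prop:prod-diffeo}). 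The weights $\sigma_k^{-2}$ then give the ``up to per-coordinate scaling'' qualifier. If all $\sigma_k$ were equal, this would be exactly $\sigma^{-2}$ times the mean squared log-Euclidean distance.

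For part (ii), I take absolute values instead. Using $\sigma_k>0$, we have $|\sigma_k^{-1}x|=\sigma_k^{-1}|x|$, and summing gives \eqref{eq:temp-l1}. To justify the phrase ``log-Euclidean increments'', I note that $\veclt$ is linear. Hence
\[
\phi(\hatTheta_i)-\phi(\hatTheta_{i-1})=\veclt\bigl(\log\hatTheta_i-\log\hatTheta_{i-1}\bigr),
\]
so the sum is a weighted $\ell_1$ norm of the vectorised difference of matrix logarithms, including the $\sqrt2$ off-diagonal scaling. To show that this differs from the raw increment $\hatTheta_i-\hatTheta_{i-1}$, I would give a remark or example: in the commuting scalar case $p=1$, $|\log a-\log b|\ne|a-b|$ in general.

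I expect no real obstacle. The only delicate point is notational: stating precisely that $\tildePhi^{-1}$ applied to a single window slice means the per-window inverse, and that $\mu,\sigma$ are window-independent. That independence is exactly what makes $\mu$ cancel. If standardisation statistics were per-window, the identity would instead carry a residual term $\mu_{i,k}-\mu_{i-1,k}$, so I would state this assumption explicitly, citing \eqref{eq:standardisation-map}.
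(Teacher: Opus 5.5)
Your proposal is correct and matches the paper's argument: both use the coordinatewise, window-shared action of $\Psi^{-1}$, so that $\mu$ cancels in consecutive differences, and then square (for (i)) or take absolute values (for (ii)) and sum. The only difference is in the interpretive clause: to separate log-Euclidean increments from raw increments you use a scalar counterexample, while the paper relates the two to first order through the Fr\'echet derivative of the matrix logarithm.
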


\begin{algorithm}[t]
\caption{TVGL--CFM generation training}
\label{alg:train}
\begin{algorithmic}[1]
\Require standardised target trajectories $z_n$, labels $y_n$; window-time grid $\tau_{1:T}$, $\tau_i=i/(T-1)$; weight $\lambda_{\mathrm{temp}}$
\While{not converged}
  \State sample batch $(z_1,y)$; $z_0\sim\mathcal N(0,I)$;\quad flow time $s\sim\mathcal U[0,1]$
  \State $z_s\leftarrow(1-s)z_0+s\,z_1$;\quad $v\leftarrow u_\theta(z_s,s,y,\tau_{1:T})$;\quad $\hat z_1\leftarrow z_s+(1-s)v$
  \State $\mathcal L\leftarrow \norm{v-(z_1-z_0)}_2^2+\lambda_{\mathrm{temp}}\,\tfrac{1}{T-1}\sum_{i=2}^{T} \psi_E(\hat z_{1,i}-\hat z_{1,i-1})$
  \State update $\theta$ by AdamW on $\mathcal L$
\EndWhile
\end{algorithmic}
\end{algorithm}

\begin{algorithm}[t]
\caption{TVGL--CFM generation sampling}
\label{alg:sample}
\begin{algorithmic}[1]
\Require label $y$; flow-time steps $L$; window-time grid $\tau_{1:T}$; standardiser $(\mu,\sigma)$
\State $z\sim\mathcal N(0,I)$;\quad $h\leftarrow 1/L$ \Comment{flow-time step}
\For{$\ell=0$ to $L-1$} \Comment{integrate ODE in flow time at fixed $\tau_{1:T}$}
  \State $z\leftarrow \mathrm{RK4\text{-}step}\big(u_\theta(\cdot,\cdot,y,\tau_{1:T}),\,z,\,\ell h,\,h\big)$
\EndFor
\State $z\leftarrow \sigma\odot z+\mu$;\quad $\widehat\Th\leftarrow \phi^{-1}(z)$ \Comment{matrix exponential per window}
\State \textbf{optional} (for $\ell_1$): off-diagonal soft-threshold $\widehat\Th$, then project to $\Spd$
\State \Return $\widehat\Th$
\end{algorithmic}
\end{algorithm}

\subsection{Forecasting: history-conditioned future flow}
\label{sec:forecast}
For forecasting we split each standardised trajectory into an observed
\emph{history} $z_{1:K}$ and a \emph{future} $z_{K+1:T}$, and learn a conditional flow
over the future block given the history and class. Crucially, the model is \emph{not}
autoregressive: it observes the prefix and samples the \emph{entire} future trajectory
jointly, which avoids error accumulation and lets the future windows attend to one
another. In the default model, the future source is no longer independent Gaussian noise;
it is a history-informed warm start (Section~\ref{sec:warmprior}) that the learned flow
corrects into a realistic future.

\paragraph{History encoder.}
The prefix is encoded by a context transformer over its window tokens (each summed with a
window-time and class embedding), then pooled into a single summary
\begin{equation}
c \;=\; \mathrm{Pool}\Big(\mathrm{TX}_{\mathrm{ctx}}\big(\mathrm{Enc}_{\mathrm h}(z_{1:K})
+\mathrm{FF}_\tau(\tau_{1:K})+\mathrm{Emb}_y(y)\big)\Big)\in\R^{d_{\mathrm{model}}},
\end{equation}
which conditions every future token through a learned projection $g(c)$.

\paragraph{Future velocity field.}
Let $z^{\mathrm f}_1=z_{K+1:T}$ denote the true future block (the CFM \emph{target},
flow time $s=1$) and let $z^{\mathrm f}_0$ denote a source future (the CFM
\emph{source}, flow time $s=0$). With the standard Gaussian source,
$z^{\mathrm f}_0\sim\mathcal N(0,I)$. With the warm-start source introduced below,
$z^{\mathrm f}_0\sim q_{\mathrm{rw}}(\cdot\mid z_{1:K})$. In both cases the conditional
path is
\begin{equation}
z^{\mathrm f}_s=(1-s)z^{\mathrm f}_0+s\,z^{\mathrm f}_1,
\qquad
v_s^{\star}=z^{\mathrm f}_1-z^{\mathrm f}_0 .
\label{eq:forecastpath}
\end{equation}
The conditional field is
\begin{equation}
u_\theta\big(z^{\mathrm f}_s,\,s,\,y,\,\tau_{K+1:T},\,c\big)=
\mathrm{Dec}\Big(\mathrm{TX}_{\mathrm f}\big(\mathrm{Enc}_{\mathrm f}(z^{\mathrm f}_s)
+\mathrm{FF}_s(s)+\mathrm{FF}_\tau(\tau_{K+1:T})+\mathrm{Emb}_y(y)+g(c)\big)\Big),
\label{eq:futurefield}
\end{equation}
where $s\in[0,1]$ is the flow time (integrated by the ODE solver to sample) and
$\tau_{K+1:T}$ is the \emph{fixed} window-time grid of the future block (telling each
token which window it occupies). The two times enter through independent feature
pipelines, $\mathrm{FF}_s$ and $\mathrm{FF}_\tau$, exactly as in the generator
\eqref{eq:ufield}. The field is trained with the CFM objective on the future block plus
two structural terms. The first is the temporal regulariser of \eqref{eq:loss} applied
to the \emph{concatenated} history-plus-predicted-future
$[\,z_{1:K};\hat z^{\mathrm f}_1\,]$ both factors taken at flow time $s=1$ which
enforces a coherent structure across the boundary. The second is a \emph{boundary} term
that matches the true first-step transition rather than forcing the future to copy the
last observed window. Writing $\hat z^{\mathrm f}_{1,1}$ for the predicted clean
endpoint (flow time $s=1$) at the first future window ($i=K+1$, the first slice of the
future block), and $z^{\mathrm f}_{1,1}$ for the true target at the same window,
\begin{equation}
\mathcal L_{\mathrm{bnd}}=\mathbb{E}\,
\norm{\big(\hat z^{\mathrm f}_{1,1}-z_{K}\big)-\big(z^{\mathrm f}_{1,1}-z_{K}\big)}_2^2
=\mathbb{E}\,\norm{\hat z^{\mathrm f}_{1,1}-z^{\mathrm f}_{1,1}}_2^2,
\label{eq:bnd}
\end{equation}
so the forecast leaves the observed history smoothly and correctly. The $\hat z_{\cdot,\cdot}$
double subscript follows the convention introduced at the start of
Section~\ref{sec:method}: first index is flow time, second is window index.

\subsection{History-informed warm-start random-walk prior}
\label{sec:warmprior}
The standard Gaussian source is valid but inefficient for forecasting: it ignores the
observed graph trajectory and forces the neural field to learn both a coarse extrapolation
and the fine correction. We therefore construct a conditional source distribution in the
same standardised log-Euclidean chart used by the flow. Let $r$ be the number of recent
history increments used to estimate graph velocity. Define
\begin{equation}
\widehat\mu
=\frac{1}{r}\sum_{j=K-r+1}^{K}\big(z_j-z_{j-1}\big),
\qquad
\widehat\sigma
=\sqrt{\frac{1}{r}\sum_{j=K-r+1}^{K}
\big(z_j-z_{j-1}-\widehat\mu\big)^{\odot 2}}+\sigma_{\min},
\label{eq:rwstats}
\end{equation}
with elementwise operations. Starting from the last observed graph state
$\tilde z^{\mathrm f}_{0,0}=z_K$, we sample the warm-start future recursively as
\begin{equation}
\tilde z^{\mathrm f}_{0,q}
=\tilde z^{\mathrm f}_{0,q-1}
+\alpha_\mu\widehat\mu
+\alpha_\sigma\widehat\sigma\odot\epsilon_q,
\qquad
\epsilon_q\sim\mathcal N(0,I),\quad q=1,\ldots,T-K,
\label{eq:rwprior}
\end{equation}
and set
$z^{\mathrm f}_0=(\tilde z^{\mathrm f}_{0,1},\ldots,
\tilde z^{\mathrm f}_{0,T-K})$. This is a random walk through \emph{graph-state space}, not
a random walk over graph vertices: each $z_j$ represents the entire precision matrix
$\Th_j$ through $z_j=\phi(\Th_j)$. Decoding any generated endpoint by
$\phi^{-1}$ therefore still yields an SPD precision matrix. The scalars
$\alpha_\mu$ and $\alpha_\sigma$ control drift and diversity. Setting
$q_{\mathrm{rw}}=\mathcal N(0,I)$ recovers the original Gaussian-source forecaster.

With this prior the loss becomes
\begin{equation}
\mathcal L_{\mathrm{forecast}}
=\mathbb{E}_{z^{\mathrm f}_0\sim q_{\mathrm{rw}}(\cdot\mid z_{1:K}),\,z^{\mathrm f}_1,\,s}
\norm{u_\theta(z^{\mathrm f}_s,s,y,\tau,c)-(z^{\mathrm f}_1-z^{\mathrm f}_0)}_2^2
+\lambda_{\mathrm{temp}}\mathcal R_{\mathrm{temp}}
+\lambda_{\mathrm{bnd}}\mathcal L_{\mathrm{bnd}} .
\label{eq:warmforecastloss}
\end{equation}
Thus the flow is trained as a \emph{correction model}: the uncorrected warm-start prior
provides a rough future, and the CFM vector field transports that rough future to the
TVGL target distribution. In experiments we report both the uncorrected prior and the
flow-corrected forecast, so improvements can be attributed to the learned transport rather
than to the extrapolation heuristic alone.

\begin{algorithm}[t]
\caption{TVGL--CFM forecasting with warm-start source (block, non-autoregressive)}
\label{alg:forecast}
\begin{algorithmic}[1]
\Require history $z_{1:K}$, label $y$; flow-time steps $L$; ensemble size $M$; window-time grid $\tau_{K+1:T}$; standardiser $(\mu,\sigma)$
\State $c\leftarrow$ history encoder on $(z_{1:K},\tau_{1:K},y)$
\State estimate $\widehat\mu,\widehat\sigma$ from recent history increments by Eq.~\eqref{eq:rwstats}
\For{$m=1$ to $M$}
  \State sample warm-start future $z\sim q_{\mathrm{rw}}(\cdot\mid z_{1:K})$ by Eq.~\eqref{eq:rwprior} $\in\R^{(T-K)\times d}$;\quad $h\leftarrow 1/L$ \Comment{flow-time step}
  \For{$\ell=0$ to $L-1$} \Comment{integrate ODE in flow time at fixed $\tau_{K+1:T}$}
    \State $z\leftarrow \mathrm{RK4\text{-}step}\big(u_\theta(\cdot,\cdot,y,\tau_{K+1:T},c),\,z,\,\ell h,\,h\big)$
  \EndFor
  \State $\widehat\Th^{(m)}_{K+1:T}\leftarrow \phi^{-1}(\sigma\odot z+\mu)$
\EndFor
\State \Return ensemble $\{\widehat\Th^{(m)}_{K+1:T}\}_{m=1}^{M}$ (point forecast: mean in embedding; spread: uncertainty)
\end{algorithmic}
\end{algorithm}

\section{Evaluation}
\label{sec:eval}
We use a \emph{trajectory-respecting} protocol: each sample is the whole chain (for
forecasting, the future block), embedded per window and concatenated into a single
vector, so metrics are sensitive to temporal arrangement and not only to the marginal set
of matrices.

\paragraph{Distributional quality (both tasks).}
We report a relative Fr\'echet distance (Rel-FD) between generated and real trajectories in
the diffeomorphic embedding.

\paragraph{Classification-accuracy score, CAS (both tasks).}
Following the train-on-synthetic, test-on-real protocol \citep{ravuri2019}, we train a compact
multi-layer perceptron (MLP) probe on generated trajectories and evaluate it on real test data.
Rather than using logistic regression, we employ a small MLP to retain a lightweight and efficient
evaluation while allowing mild nonlinear decision boundaries. We also report a TVGL-precision
oracle, obtained by training the same probe on real training trajectories and evaluating it on
held-out real trajectories; the oracle-to-model gap therefore measures the discriminative information
lost during generation. By \eqref{eq:negation}, CAS features may equivalently be represented in
covariance space.

\paragraph{Forecasting-specific.}
We add (i) direct error: standardised-embedding RMSE and Frobenius RMSE on the future
precision matrices against the true future.
A natural baseline is \emph{persistence} (copy the last observed window across the
horizon); for the warm-start version we also report the \emph{uncorrected random-walk
prior}. A useful forecaster must beat these baselines on direct error while also matching
the future distribution and dynamics.

\section{Experiments}
\label{sec:exp}

\subsection{Generating Realistic Brain-Network Dynamics}

We evaluate class-conditional generation of full TVGL precision trajectories on four EEG
motor-imagery datasets under pooled cross-session splits; results are summarised in
Table~\ref{tab:generation_results}. Covariance-based Riemannian representations and
data-driven graph constructions provide established ways to characterise multiscale and
higher-order structure in EEG signals
\citep{barachant2011,kobler2022,roy2025,roy2026}. In motor imagery, discriminative information
is largely expressed through sensorimotor-rhythm (de)synchronisation
\citep{pfurtscheller1999}, commonly exploited by spatial filters such as CSP
\citep{blankertz2008,lotte2018}. The central question is whether a synthetic trajectory
preserves the class-discriminative second-order structure of real data. We assess this using
the classification-accuracy score (CAS), obtained by training a small multi-layer perceptron (MLP)
only on generated trajectories and evaluating it on held-out real trials.

Across all four datasets TVGL--CFM is the strongest generator. It reaches a CAS AUC of
$0.954$ (F1 $0.892$) on Zhou2016, $0.733$ (F1 $0.672$) on BNCI2015\_001, $0.718$ (F1 $0.682$)
on BNCI2014\_002, and $0.715$ (F1 $0.640$) on BNCI2014\_001. In every case it is clearly
ahead of the raw-EEG generators decoded through TVGL: the runner-up changes across datasets
, and the best competitor
trails TVGL--CFM by roughly $0.12$ to $0.16$ AUC. The gap to the TVGL-precision oracle is
modest (for example $0.954$ vs.\ oracle $0.965$ on Zhou2016 and $0.715$ vs.\ $0.794$ on
BNCI2014\_001), indicating that most of the discriminative signal present in the real targets
survives the flow. TVGL--CFM also attains the lowest relative Fr\'echet distance on every
dataset (Rel-FD between $1.36$ and $2.24$, versus roughly $2.7$ to $90$ for the raw-EEG
generators), placing it closest to the real inter-split reference of $1.0$. Every generated
matrix is symmetric positive-definite by construction, so the samples are valid precision
matrices without post-hoc projection. In particular, highlighted by the performance against the state of the art JET Transformer Flow matching model /cite{jet} operating directly on the structured precision targets is thus the more effective route to realistic dynamic brain-network generation, showing that realistic signal generation does not necessarily retain the temporal \textit{dynamics.}

\begin{table}[H]
\centering
\caption{\textbf{Generation results (classifiability and fidelity)} across four EEG datasets.
CAS AUC/F1 is train-on-generated $\to$ test-on-real classifiability (primary; higher is better).
Rel-FD is the Fr\'echet distance between generated and real test trajectories, relative to the
real train$\leftrightarrow$test reference distance (secondary; lower is better, 1.0 =
indistinguishable from real inter-split variation). \textit{TVGL oracle} is the CAS ceiling from
real held-out TVGL trajectories, not a generative model, and is set apart by a rule; all
rows below the rule (including TVGL--CFM) are generative models compared like-for-like, with
the best per column in \textbf{bold}.}
\label{tab:generation_results}
\small
\resizebox{\textwidth}{!}{%
\begin{tabular}{lcccccccccccc}
\toprule
& \multicolumn{3}{c}{Zhou2016}
& \multicolumn{3}{c}{BNCI2014\_001}
& \multicolumn{3}{c}{BNCI2014\_002}
& \multicolumn{3}{c}{BNCI2015\_001} \\
\cmidrule(lr){2-4}
\cmidrule(lr){5-7}
\cmidrule(lr){8-10}
\cmidrule(lr){11-13}
Method
& AUC & F1 & Rel-FD
& AUC & F1 & Rel-FD
& AUC & F1 & Rel-FD
& AUC & F1 & Rel-FD \\
\midrule
\textit{TVGL oracle}
& 0.965 & 0.900 & --
& 0.794 & 0.722 & --
& 0.777 & 0.717 & --
& 0.762 & 0.692 & -- \\
\midrule
\textbf{TVGL--CFM}
& \textbf{0.954} & \textbf{0.892} & \textbf{1.430}
& \textbf{0.715} & \textbf{0.640} & \textbf{2.242}
& \textbf{0.718} & \textbf{0.682} & \textbf{1.363}
& \textbf{0.733} & \textbf{0.672} & \textbf{1.526} \\

JET
& 0.771 & 0.701 & 11.074
& 0.500 & 0.333 & 77.182
& 0.599 & 0.333 & 90.519
& 0.569 & 0.424 & 22.919 \\

cVAE
& 0.828 & 0.725 & 2.677
& 0.539 & 0.503 & 14.811
& 0.496 & 0.427 & 28.149
& 0.530 & 0.526 & 42.842 \\

cWGAN-GP
& 0.691 & 0.635 & 5.319
& 0.477 & 0.454 & 6.481
& 0.518 & 0.519 & 6.999
& 0.523 & 0.514 & 22.581 \\

EEG-GAN
& 0.506 & 0.479 & 6.445
& 0.597 & 0.558 & 10.515
& 0.570 & 0.581 & 5.682
& 0.566 & 0.548 & 8.099 \\
\bottomrule
\end{tabular}%
}
\end{table}

\subsection{Forecasting Future Brain-Network Dynamics}

\begin{table}[H]
\centering
\caption{\textbf{Forecasting results (geometric error)} on four EEG datasets. The model
observes a history of $K$ windows and forecasts the remaining $T-K$. AIRM is the
affine-invariant Riemannian distance \citep{barachant2011} (primary), and logE-RMSE
is the log-Euclidean RMSE (secondary). Both metrics are lower-is-better and are
reported as mean$\pm$std over held-out trials. Persistence, linear history-drift,
and the uncorrected warm-start prior are heuristic baselines; the remaining models
generate raw EEG signals that are converted to precision trajectories using TVGL.
Best result per column in \textbf{bold}.}
\label{tab:forecast}
\small
\resizebox{\textwidth}{!}{%
\begin{tabular}{lcccccccc}
\toprule
& \multicolumn{2}{c}{BNCI2014\_001}
& \multicolumn{2}{c}{BNCI2014\_002}
& \multicolumn{2}{c}{BNCI2015\_001}
& \multicolumn{2}{c}{Zhou 2016} \\
\cmidrule(lr){2-3}
\cmidrule(lr){4-5}
\cmidrule(lr){6-7}
\cmidrule(lr){8-9}
Method
& AIRM & logE-RMSE
& AIRM & logE-RMSE
& AIRM & logE-RMSE
& AIRM & logE-RMSE \\
\midrule
Forecast TVGL--CFM
& \textbf{2.84$\pm$0.53}
& \textbf{2.58$\pm$0.51}
& \textbf{1.99$\pm$0.29}
& \textbf{1.80$\pm$0.29}
& \textbf{2.04$\pm$0.22}
& \textbf{1.85$\pm$0.22}
& \textbf{2.3450$\pm$0.2348}
& \textbf{2.1886$\pm$0.2545} \\

Persistence (copy last)
& 3.40$\pm$0.74
& 3.10$\pm$0.74
& 2.59$\pm$0.42
& 2.34$\pm$0.45
& 2.65$\pm$0.35
& 2.40$\pm$0.37
& 2.9763$\pm$0.5249
& 2.7733$\pm$0.5500 \\

Linear history-drift
& 9.19$\pm$2.49
& 9.10$\pm$2.75
& 5.49$\pm$1.13
& 5.20$\pm$1.25
& 5.76$\pm$1.02
& 5.52$\pm$1.14
& 6.4502$\pm$1.6416
& 6.2970$\pm$1.8011 \\

Warm-start prior (uncorrected)
& 9.61$\pm$2.47
& 9.48$\pm$2.72
& 5.80$\pm$1.13
& 5.47$\pm$1.24
& 6.08$\pm$1.01
& 5.82$\pm$1.12
& 6.8070$\pm$1.6135
& 6.6293$\pm$1.7685 \\

JET
& 3.40$\pm$0.40
& 3.05$\pm$0.39
& 2.77$\pm$0.30
& 2.47$\pm$0.30
& 2.67$\pm$0.24
& 2.39$\pm$0.24
& 3.7854$\pm$0.3756
& 3.5994$\pm$0.3845 \\

EEG-GAN
& 3.77$\pm$0.30
& 3.39$\pm$0.30
& 2.96$\pm$0.28
& 2.66$\pm$0.30
& 2.69$\pm$0.26
& 2.40$\pm$0.27
& 3.0192$\pm$0.2943
& 2.7883$\pm$0.3089 \\

CNN
& 3.70$\pm$0.60
& 3.47$\pm$0.66
& 3.17$\pm$0.59
& 3.00$\pm$0.64
& 3.55$\pm$0.65
& 3.41$\pm$0.70
& 3.1340$\pm$0.3195
& 2.9084$\pm$0.3348 \\

cWGAN-GP
& 3.91$\pm$0.75
& 3.66$\pm$0.82
& 3.24$\pm$0.64
& 3.07$\pm$0.70
& 3.07$\pm$0.51
& 2.88$\pm$0.55
& 3.2982$\pm$0.4660
& 3.1398$\pm$0.5047 \\

cVAE
& 6.36$\pm$0.71
& 6.32$\pm$0.72
& 5.05$\pm$0.74
& 5.00$\pm$0.74
& 4.46$\pm$0.59
& 4.37$\pm$0.61
& 5.0693$\pm$0.4772
& 4.9990$\pm$0.4798 \\
\bottomrule
\end{tabular}%
}
\end{table}
We next evaluate whether the proposed Forecast TVGL-CFM can predict future brain-network dynamics from an observed prefix of a held-out EEG trial. For each trial, TVGL is first used to estimate a sequence of precision matrices
\[
    \Theta_{1:T} = \{\Theta_1,\ldots,\Theta_T\},
\]
where each \(\Theta_t\) represents the time-varying conditional-dependence structure of the EEG channels. The model observes only the first \(K=8\) windows and forecasts the remaining \(T-K=12\) future windows:
\[
    p_\theta(\Theta_{K+1:T} \mid \Theta_{1:K}, y).
\]
Generation is performed in log-Euclidean coordinates to preserve the positive-definite precision-matrix constraint after decoding. We additionally use a history-informed random-walk prior in log-precision space, so the flow corrects a rough extrapolation of the observed graph trajectory rather than starting from unstructured Gaussian noise.

For a representative held-out test trajectory from the BNCI2015\_001 paper-style cross-session split, the model observes the first $K=8$ windows and forecasts the remaining $T-K=12$. Although the forecast is not exact, the generated trajectory tracks the broad future evolution of the inferred brain-network dynamics and remains in the correct range after the prediction boundary.

The flow-corrected model substantially improves over the uncorrected history-informed
prior, and this gap is the central evidence that the learned transport, not the
extrapolation heuristic, drives performance. On BNCI2015\_001 the uncorrected warm-start
prior reaches an AIRM of \(6.08\), whereas the flow-corrected forecast reaches \(2.04\), a
roughly three-fold reduction; the same pattern holds on BNCI2014\_001 (\(9.61\to2.84\)) and
BNCI2014\_002 (\(5.80\to1.99\)). The random-walk prior alone is therefore far from
sufficient, but it supplies a useful starting point that the conditional flow corrects into
a geometrically faithful future. Forecast TVGL--CFM also beats the strong persistence
baseline on every dataset (e.g.\ AIRM \(2.04\) vs.\ \(2.65\) on BNCI2015\_001) and
outperforms all raw-EEG generators decoded through TVGL on the affine-invariant geometry,
confirming that the flow is simultaneously accurate in the underlying SPD geometry and
structurally faithful to the held-out targets.

These results suggest that the future evolution of EEG-derived brain-network structure is partially predictable from an observed graph-history prefix. Rather than forecasting raw EEG waveforms directly, Forecast TVGL-CFM predicts the evolution of structured precision-matrix trajectories, producing valid positive-definite future graphs.

\subsection{Chaotic Maps Forecasting}
\begin{table}[H]
\centering
\caption{\textbf{Forecasting results across chaotic systems} for forecast horizons
$h \in \{6,8,10\}$ windows. AIRM is the affine-invariant Riemannian
distance (primary), and logE-RMSE is the log-Euclidean RMSE (secondary); both are
lower is better, reported as mean$\pm$std over held-out trajectories.
\textit{Sparsified} thresholds the TVGL--CFM output to a hard support before
evaluation. Persistence, linear history-drift, and the uncorrected warm-start
prior are heuristics; the remaining rows are neural forecasting baselines.
Best per column within each system is shown in \textbf{bold}.}
\label{tab:forecast_all}
\small
\resizebox{\textwidth}{!}{%
\begin{tabular}{lcccccc}
\toprule
& \multicolumn{2}{c}{$h=6$}
& \multicolumn{2}{c}{$h=8$}
& \multicolumn{2}{c}{$h=10$} \\
\cmidrule(lr){2-3}
\cmidrule(lr){4-5}
\cmidrule(lr){6-7}
Method
& AIRM & logE-RMSE
& AIRM & logE-RMSE
& AIRM & logE-RMSE \\
\midrule

\multicolumn{7}{l}{\textit{Lorenz}} \\
\midrule
Forecast TVGL--CFM
& \textbf{1.472$\pm$0.604}
& \textbf{1.665$\pm$0.750}
& \textbf{1.428$\pm$0.520}
& \textbf{1.644$\pm$0.676}
& \textbf{1.405$\pm$0.498}
& \textbf{1.662$\pm$0.660} \\

Forecast TVGL--CFM (sparsified)
& 1.499$\pm$0.585
& 1.677$\pm$0.719
& 1.464$\pm$0.501
& 1.663$\pm$0.644
& 1.442$\pm$0.482
& 1.675$\pm$0.630 \\

\cmidrule(lr){1-7}
Persistence
& 1.781$\pm$1.078
& 1.982$\pm$1.154
& 1.810$\pm$1.063
& 2.043$\pm$1.117
& 1.801$\pm$1.015
& 2.054$\pm$1.067 \\

Linear history-drift
& 3.433$\pm$2.378
& 3.661$\pm$2.522
& 4.028$\pm$2.882
& 4.331$\pm$3.104
& 4.571$\pm$3.329
& 4.935$\pm$3.611 \\

Warm-start prior
& 3.819$\pm$2.505
& 4.040$\pm$2.646
& 4.390$\pm$2.947
& 4.681$\pm$3.175
& 4.954$\pm$3.304
& 5.310$\pm$3.581 \\

\cmidrule(lr){1-7}
CNN
& 2.812$\pm$0.799
& 2.891$\pm$0.809
& 2.122$\pm$0.510
& 2.257$\pm$0.598
& 2.158$\pm$0.506
& 2.288$\pm$0.578 \\

LSTM
& 6.357$\pm$0.610
& 6.710$\pm$0.508
& 4.804$\pm$0.695
& 5.359$\pm$0.663
& 5.624$\pm$0.565
& 6.114$\pm$0.467 \\

cVAE
& 4.864$\pm$0.652
& 4.964$\pm$0.614
& 4.311$\pm$0.874
& 4.404$\pm$0.865
& 4.300$\pm$0.783
& 4.396$\pm$0.780 \\

cWGAN-GP
& 2.252$\pm$0.453
& 2.390$\pm$0.526
& 2.180$\pm$0.395
& 2.286$\pm$0.446
& 1.969$\pm$0.354
& 2.206$\pm$0.445 \\

\midrule
\multicolumn{7}{l}{\textit{MacArthur}} \\
\midrule
Forecast TVGL--CFM
& \textbf{1.019$\pm$0.110}
& \textbf{1.037$\pm$0.112}
& \textbf{1.010$\pm$0.095}
& \textbf{1.029$\pm$0.095}
& \textbf{1.005$\pm$0.089}
& \textbf{1.022$\pm$0.091} \\

Forecast TVGL--CFM (sparsified)
& 1.357$\pm$0.091
& 1.395$\pm$0.093
& 1.367$\pm$0.073
& 1.405$\pm$0.075
& 1.341$\pm$0.068
& 1.378$\pm$0.071 \\

\cmidrule(lr){1-7}
Persistence
& 1.384$\pm$0.216
& 1.407$\pm$0.211
& 1.379$\pm$0.209
& 1.406$\pm$0.202
& 1.372$\pm$0.207
& 1.400$\pm$0.200 \\

Linear history-drift
& 2.723$\pm$0.648
& 2.790$\pm$0.673
& 3.164$\pm$0.788
& 3.285$\pm$0.837
& 3.600$\pm$0.931
& 3.772$\pm$1.002 \\

Warm-start prior
& 2.995$\pm$0.624
& 3.046$\pm$0.650
& 3.458$\pm$0.757
& 3.562$\pm$0.810
& 3.929$\pm$0.915
& 4.083$\pm$0.984 \\

\cmidrule(lr){1-7}
CNN
& 2.983$\pm$1.022
& 3.090$\pm$1.029
& 2.209$\pm$0.398
& 2.289$\pm$0.424
& 2.192$\pm$0.348
& 2.282$\pm$0.386 \\

LSTM
& 4.404$\pm$1.329
& 5.082$\pm$1.270
& 4.268$\pm$1.472
& 4.818$\pm$1.537
& 6.105$\pm$1.471
& 6.577$\pm$1.381 \\

cVAE
& 4.501$\pm$0.550
& 4.625$\pm$0.536
& 6.074$\pm$0.748
& 6.143$\pm$0.734
& 6.123$\pm$0.675
& 6.187$\pm$0.657 \\

cWGAN-GP
& 2.255$\pm$0.209
& 2.420$\pm$0.223
& 1.974$\pm$0.155
& 2.009$\pm$0.157
& 1.744$\pm$0.107
& 1.816$\pm$0.115 \\

\midrule
\multicolumn{7}{l}{\textit{Hopfield}} \\
\midrule
Forecast TVGL--CFM
& \textbf{0.858$\pm$0.179}
& \textbf{0.937$\pm$0.214}
& \textbf{0.837$\pm$0.137}
& \textbf{0.906$\pm$0.163}
& \textbf{0.861$\pm$0.112}
& \textbf{0.938$\pm$0.134} \\

Forecast TVGL--CFM (sparsified)
& 1.142$\pm$0.138
& 1.166$\pm$0.138
& 1.127$\pm$0.105
& 1.145$\pm$0.105
& 1.143$\pm$0.088
& 1.163$\pm$0.088 \\

\cmidrule(lr){1-7}
Persistence
& 1.220$\pm$0.366
& 1.281$\pm$0.352
& 1.213$\pm$0.356
& 1.283$\pm$0.334
& 1.199$\pm$0.348
& 1.273$\pm$0.325 \\

Linear history-drift
& 2.387$\pm$0.911
& 2.470$\pm$0.953
& 2.768$\pm$1.083
& 2.892$\pm$1.153
& 3.153$\pm$1.267
& 3.315$\pm$1.367 \\

Warm-start prior
& 2.630$\pm$0.895
& 2.696$\pm$0.942
& 3.033$\pm$1.054
& 3.136$\pm$1.127
& 3.436$\pm$1.190
& 3.577$\pm$1.293 \\

\cmidrule(lr){1-7}
CNN
& 2.647$\pm$0.735
& 2.831$\pm$0.761
& 1.987$\pm$0.409
& 2.111$\pm$0.473
& 1.997$\pm$0.314
& 2.137$\pm$0.374 \\

LSTM
& 4.189$\pm$0.969
& 4.544$\pm$0.897
& 2.363$\pm$0.489
& 2.618$\pm$0.488
& 2.643$\pm$0.556
& 2.979$\pm$0.579 \\

cVAE
& 4.038$\pm$0.581
& 4.143$\pm$0.571
& 4.012$\pm$0.578
& 4.138$\pm$0.572
& 4.247$\pm$0.591
& 4.354$\pm$0.570 \\

cWGAN-GP
& 1.673$\pm$0.186
& 1.812$\pm$0.198
& 1.387$\pm$0.144
& 1.426$\pm$0.153
& 1.297$\pm$0.138
& 1.361$\pm$0.151 \\
\bottomrule
\end{tabular}%
}
\end{table}

Table~\ref{tab:forecast_all} confirms that the gains are not EEG-specific. On a
coupled Lorenz system, a MacArthur ecological competition model, and a Hopfield
associative-memory network, Forecast TVGL--CFM attains the lowest AIRM and logE-RMSE across
all three horizons $h\in\{6,8,10\}$ and all three systems. As in the EEG experiments, the
uncorrected warm-start prior is far behind in geometry (at $h=6$, Lorenz AIRM $3.82\to1.47$
and Hopfield $2.63\to0.86$ after flow correction), so the accuracy is attributable to the
learned transport rather than to the extrapolation heuristic. The sparsified readout costs a
modest amount of AIRM and logE-RMSE in exchange for a hard sparse support (at $h=6$, Lorenz
AIRM rises from $1.47$ to $1.50$ and Hopfield from $0.86$ to $1.14$), so it is preferable only
when a strictly sparse forecast is required. The strongest raw-signal competitor (cWGAN-GP)
remains roughly $1.4$ to $2.2\times$ worse than TVGL--CFM in AIRM on every system, so
forecasting the structured precision trajectory directly is more accurate in the underlying
SPD geometry than synthesising raw signals and estimating the trajectory afterwards. The
ordering is stable as the horizon grows from $h=6$ to $h=10$: the flow-corrected forecast
degrades gracefully while keeping its lead over the heuristics and the raw-signal baselines.

\subsection{Gene-expression TVGL generation}

To test whether the framework transfers beyond time-series signals, we apply it to
\emph{gene-expression} data, where the graphical object of interest is a gene--gene
conditional-dependence network rather than a brain network. We use two standard two-class
expression benchmarks: \textbf{ColonCancer} (tumour vs.\ normal tissue) and
\textbf{Leukemia} (two leukemia subtypes). A panel of $p$ genes is treated as the channels,
and the per-class TVGL estimator produces the sparse precision trajectories used as
generation targets. The aim is to synthesise class-conditional trajectories that preserve
the discriminative dependency structure of held-out real data. Synthetic trajectories are
evaluated using CAS, the log-Euclidean trajectory Fréchet distance, and its relative version
(Rel-TFD). As shown in Table~\ref{tab:gene_generation}, a probe trained on TVGL--CFM samples
matches or exceeds the corresponding real-train-to-test reference on held-out data, indicating
that the generated trajectories retain generalisable class structure rather than merely
reproducing the training set. TVGL--CFM also achieves trajectory Fréchet distances roughly
$5$ to $7\times$ lower than raw-signal generative baselines decoded through TVGL.
\begin{table}[H]
\centering
\caption{
\textbf{Gene-expression TVGL trajectory generation.}
CAS is the train-on-synthetic/test-on-real classifiability score. logE-TFD is the
log-Euclidean trajectory Fréchet distance; Rel-TFD normalizes it by the real
train--test reference. Lower logE-TFD and Rel-TFD are better, while higher CAS
is better.
}
\label{tab:gene_generation}
\scriptsize
\setlength{\tabcolsep}{4pt}
\begin{adjustbox}{width=\linewidth}
\begin{tabular}{llcccc}
\toprule
Dataset & Method & CAS AUC & CAS F1 & logE-TFD & Rel-TFD \\
\midrule

\multirow{4}{*}{ColonCancer}
& Real train $\rightarrow$ test oracle
& 1.000 & 1.000 & 1001.118 & 1.000 \\
& TVGL--CFM
& 1.000 & 1.000 & \textbf{383.644} & \textbf{0.383} \\
& cVAE
& 1.000 & 1.000 & 2185.901 & 2.183 \\
& cWGAN-GP
& 0.380 & 0.506 & 2550.091 & 2.547 \\

\midrule

\multirow{4}{*}{Leukemia}
& Real train $\rightarrow$ test oracle
& 1.000 & 1.000 & 1055.289 & 1.000 \\
& TVGL--CFM
& 1.000 & 1.000 & \textbf{399.237} & \textbf{0.378} \\
& cVAE
& 1.000 & 1.000 & 2476.963 & 2.347 \\
& cWGAN-GP
& 0.357 & 0.000 & 2168.856 & 2.055 \\

\bottomrule
\end{tabular}
\end{adjustbox}
\end{table}

\section{Related work}
\paragraph{Generative connectivity and SPD geometry.}
\textsc{DiffeoCFM} \citep{collas2025} generates \emph{static} SPD covariance/correlation
matrices via pullback-geometry flow matching; \citet{marti2020} generate static financial
correlation matrices with GANs. Deep generative modelling more broadly spans variational
autoencoders \citep{kingma2014}, GANs \citep{goodfellow2014}, normalizing flows
\citep{dinh2017,kingma2018glow}, autoregressive models \citep{vandenoord2016}, and
diffusion \citep{ho2020,song2019,nichol2021}; intrinsic SPD approaches are surveyed by
\citet{ju2025}. We differ in generating (and forecasting) \emph{temporal
trajectories} of inferred graphs directly from multivariate time-series.

\paragraph{Graph generation.}
A large literature generates discrete or continuously-relaxed graphs, for example discrete
diffusion/flow-matching on adjacency \citep{vignac2023}, spectral approaches that flow
eigenvalues/eigenvectors and reconstruct adjacency \citep{huang2025sfmg}, and autoregressive
models for weighted graphs \citep{williams2025weighted}. These target the graph
\emph{topology} (often static, unweighted) rather than a temporally coherent time series of SPD
precision matrices with a manifold-aware embedding.

\paragraph{Probabilistic time-series forecasting.}
Deep generative forecasters such as diffusion-based multivariate models \citep{rasul2021}
predict future \emph{values}; we instead forecast the future \emph{dependency structure} (a
precision-matrix trajectory on a manifold). Data-coupled trajectory simulators such as
STFlow \citep{bennema2026stflow} show that conditioning information can be built directly
into the source distribution; our warm-start prior adapts this idea from coordinate
trajectories to log-Euclidean precision-graph trajectories. It is important to note that our approach is probabilistic as well in the sense that we learn a distribution over graph time series. Optimal-transport couplings
\citep{tong2024} are another natural way to tighten our flows and a direction for future
work.
\section{Discussion and Limitations}
TVGL--CFM inherits the strengths of pullback-geometry flow matching (fast training and
sampling, and samples that are valid SPD matrices by construction) while adding an
explicitly dynamic, sparse, precision-valued target and a temporal-consistency prior that
serves both generation and forecasting. Its main limitations concern the target it is built
on and the geometry it exploits. Because the TVGL trajectories \emph{are} the model's
targets, the regularisers $\lambda$ and $\beta$ set a ceiling on achievable fidelity, and we
hold them fixed rather than selecting them per dataset (for example by AIC, as in
\citealp{hallac2017}); a poor choice degrades the targets before the flow matching. Scaling is
a further concern, since the embedding dimension grows quadratically with the number of
channels and the sample complexity of distribution estimation grows quickly with dimension
\citep{oko2023}, so very fine parcellations remain out of reach.

Sparsity is only approximate under the
smooth and group penalties, though optional thresholding restores it in the $\ell_1$ regime.
More fundamentally, TVGL and its local-Gaussian assumption may fit poorly when the signals are
strongly non-Gaussian or heavy-tailed, where Laplacian-constrained or heavy-tailed graph
learners would make better target generators; because our framework is agnostic to the
estimator that produces the targets, such alternatives can be substituted without changing
the flow model itself.

\section{Conclusion}
We presented TVGL--CFM, a geometry-aware framework that both generates and forecasts
time-varying sparse SPD precision trajectories by combining time-varying graphical-lasso
targets with a joint-trajectory Riemannian flow-matching model. We showed that a trajectory
of SPD matrices lies on a product Riemannian manifold and constructed an exact
diffeomorphism from that space to a Euclidean space in which standard flow matching applies.
This lets us train a single generative model that both synthesises and forecasts
matrix-valued time series of statistically plausible graphs. Crucially, the method applies
to any multivariate time series and yields a rich generative model for studying interaction
dynamics in complex temporal systems. Empirically, working directly in this diffeomorphic
matrix-time-series space produces significantly better results than generating in signal
space before estimating TVGL trajectories post hoc, indicating that interaction dynamics are
best learned in this geometry.

We do not claim that TVGL is the best method for extracting dynamic graphs from time
series, only that it is a statistically principled and reproducible target. To our
knowledge, this is the first extension of Riemannian diffeomorphic flow matching to
\emph{time series of matrices}, and our framework is deliberately estimator-agnostic: future
work should explore alternative graph estimators that can be plugged into the same pipeline.

\section*{Acknowledgements}
O.R. acknowledges support from the Engineering and Physical Sciences Research Council (EPSRC) through the University of Strathclyde Doctoral Training Partnership [grant EP/W524670/1; studentship reference 2925215].

\appendix

%
\section{Design choices}
Table~\ref{tab:ablation} summarises the main design choices and their effect. The embedding
standardisation is validated quantitatively by the generation and forecasting results above;
the remaining choices are motivated in Sections~\ref{sec:method}--\ref{sec:eval}.

\begin{table}[H]
\centering
\caption{Design choices and their effect.}
\label{tab:ablation}
\small
\begin{tabular}{p{0.34\textwidth}p{0.58\textwidth}}
\toprule
Choice & Effect \\
\midrule
Standardise embedding (\S\ref{sec:flow}) & Prevents under-dispersion; lowers discriminator AUC and corrects spectrum std-ratio. \\
OAS shrinkage covariance (\S\ref{sec:tvgl}) & Well-conditioned targets when $n\!\sim\!p$; stabilises TVGL and downstream metrics. \\
Scale-aware $\lambda,\beta$ (\S\ref{sec:tvgl-bg}) & Preserves class signal and genuine dynamics in the targets (avoids over-smoothing/over-sparsifying). \\
Covariance-space CAS (\S\ref{sec:eval}) & Measures the discriminative second-order structure; diagnoses where signal is lost. \\
Warm-start random-walk source (\S\ref{sec:warmprior}) & Reduces source--target transport by extrapolating recent log-precision graph velocity; report uncorrected prior to verify the flow adds value. \\
Boundary term, forecasting (\S\ref{sec:forecast}) & Smooth, correct history$\to$future handoff; avoids collapse to persistence and discontinuities. \\
Block (non-autoregressive) future (\S\ref{sec:forecast}) & Future windows attend jointly; avoids autoregressive error accumulation over the horizon. \\
\bottomrule
\end{tabular}
\end{table}

\section{Pullback Geometry for Time-Varying Precision Trajectories}
\label{app:pullback}
This appendix lifts the pullback-CFM framework of \citet{collas2025}
from a single SPD matrix to a trajectory of SPD precision matrices, adds
embedding standardisation, and establishes the equivalences required for
the history-conditioned warm-start forecasting variant of
Section~\ref{sec:forecast}. All results are stated for the log-Euclidean
diffeomorphism of Eq.~\eqref{eq:diffeo}, but extend verbatim to any
global diffeomorphism $\Spd\to\R^{d}$ (e.g.\ log-Cholesky).

\subsection{Setup and notation}
\label{app:notation}

Let $p$ be the channel count, $d:=p(p+1)/2$, and $T$ the trajectory length.
Recall the log-Euclidean diffeomorphism from Eq.~\eqref{eq:diffeo}:
\begin{equation}
\label{eq:phi-def}
  \phi:\Spd\to\R^{d},\quad \phi(\Theta)=\veclt(\log\Theta),
  \qquad
  \phi^{-1}:\R^{d}\to\Spd,\quad \phi^{-1}(\eta)=\exp\bigl(\veclt^{-1}(\eta)\bigr),
\end{equation}
where $\veclt$ vectorises the lower-triangular part with off-diagonal
entries scaled by $\sqrt 2$ so that
\begin{equation}
\label{eq:veclt-isometry}
  \|\phi(\Theta)-\phi(\Theta')\|_{2}
  \;=\;\|\log\Theta-\log\Theta'\|_{F}
  \;=\;\dLE(\Theta,\Theta'),
\end{equation}
i.e.\ $\veclt$ is an isometry between the symmetric matrices
$(\Sym,\langle\cdot,\cdot\rangle_{F})$ and
$(\R^{d},\langle\cdot,\cdot\rangle_{2})$.

\paragraph{Product manifold.}
The trajectory manifold is
\begin{equation}
\label{eq:Mtraj-def}
  \Mtraj
  \;:=\;\underbrace{\Spd\times\cdots\times\Spd}_{T\text{ factors}},
\end{equation}
with tangent space $T_{\Theta_{1:T}}\Mtraj=\prod_{i=1}^{T}T_{\Theta_{i}}\Spd$.
We equip $\Mtraj$ with the \emph{product log-Euclidean metric}
\begin{equation}
\label{eq:product-le-metric}
  \langle\xi,\eta\rangle^{\mathrm{prod}}_{\Theta_{1:T}}
  \;:=\;\sum_{i=1}^{T}\langle\xi_{i},\eta_{i}\rangle^{\mathrm{LE}}_{\Theta_{i}}.
\end{equation}
Let $E:=\R^{T\times d}$ with the standard Euclidean inner product, and define
\begin{equation}
\label{eq:Phi-def}
  \Phi:\Mtraj\to E,
  \qquad
  \Phi(\Theta_{1:T}):=\bigl(\phi(\Theta_{1}),\dots,\phi(\Theta_{T})\bigr).
\end{equation}
For forecasting we additionally use $\Mtrajfut:=(\Spd)^{T-K}$ and
$\Ef:=\R^{(T-K)\times d}$, with $\Phif$ defined analogously on the
future block.

\paragraph{Identities used throughout.}
Two identities of differentials, which follow from the chain rule applied to
$\Phi\circ\Phi^{-1}=\mathrm{Id}_{E}$ and $\Phi^{-1}\circ\Phi=\mathrm{Id}_{\Mtraj}$, are
\begin{equation}
\label{eq:dphi-inverse}
  D\Phi\bigl(\Phi^{-1}(z)\bigr)\circ D\Phi^{-1}(z)=\mathrm{Id}_{E},
  \qquad
  D\Phi^{-1}\bigl(\Phi(x)\bigr)\circ D\Phi(x)=\mathrm{Id}_{T_{x}\Mtraj}.
\end{equation}

\subsection{Proof of Proposition~\ref{prop:prod-diffeo} (product diffeomorphism and pullback metric)}
\label{app:proof-prod-diffeo}

\begin{proof}[Proof of Proposition~\ref{prop:prod-diffeo}]
\emph{Diffeomorphism.}
$\phi$ is a $C^{\infty}$ bijection $\Spd\to\R^{d}$ with $C^{\infty}$ inverse, since
the matrix logarithm and exponential are smooth on $\Spd$ and on the space
$\Sym$ of symmetric matrices respectively, and $\veclt$ is a linear
isomorphism. The map
\[
  \Phi=\phi\times\cdots\times\phi:\,(\Spd)^{T}\longrightarrow(\R^{d})^{T}\cong\R^{T\times d}
\]
is therefore a $C^{\infty}$ bijection with $C^{\infty}$ inverse
$\Phi^{-1}=\phi^{-1}\times\cdots\times\phi^{-1}$, hence a global diffeomorphism.

\emph{Pullback metric.}
The differential $D\Phi(\Theta_{1:T})$ is block-diagonal in the trajectory
index: for $\xi\in T_{\Theta_{1:T}}\Mtraj$,
\begin{equation}
\label{eq:dphi-block}
  D\Phi(\Theta_{1:T})[\xi]
  \;=\;\bigl(D\phi(\Theta_{1})[\xi_{1}],\dots,D\phi(\Theta_{T})[\xi_{T}]\bigr)\in E.
\end{equation}
Writing $g_{E}(u,v)=\sum_{i,k}u_{i,k}v_{i,k}=\sum_{i}\langle u_{i},v_{i}\rangle_{2}$
and applying \eqref{eq:dphi-block} to $\xi,\eta\in T_{\Theta_{1:T}}\Mtraj$,
\[
\begin{aligned}
  (\Phi^{*}g_{E})_{\Theta_{1:T}}(\xi,\eta)
  &= g_{E}\bigl(D\Phi(\Theta_{1:T})[\xi],\,D\Phi(\Theta_{1:T})[\eta]\bigr)\\
  &= \sum_{i=1}^{T}\bigl\langle D\phi(\Theta_{i})[\xi_{i}],\,D\phi(\Theta_{i})[\eta_{i}]\bigr\rangle_{2}\\
  &= \sum_{i=1}^{T}(\phi^{*}g_{E})_{\Theta_{i}}(\xi_{i},\eta_{i})
   = \sum_{i=1}^{T}\langle\xi_{i},\eta_{i}\rangle^{\mathrm{LE}}_{\Theta_{i}},
\end{aligned}
\]
where the third equality is the definition of the pullback metric on each
factor and the fourth uses that $\phi^{*}g_{E}$ is the log-Euclidean
metric on $\Spd$ \citep[Sec.~3.2]{collas2025}. The right-hand side
is the product log-Euclidean metric \eqref{eq:product-le-metric}.
\end{proof}

\subsection{Proof of Proposition~\ref{prop:traj-cfm} (trajectory CFM loss equivalence)}
\label{app:proof-traj-cfm}

Proposition~\ref{prop:traj-cfm} is stated in Section~\ref{sec:flow}; the
pullback field \eqref{eq:pullback-field-traj} and target loss
\eqref{eq:traj-cfm-loss} are as defined there.

\begin{proof}[Proof of Proposition~\ref{prop:traj-cfm}]
Fix a label $y$ and draw $\Theta_{1:T}^{0}\sim p(\cdot\mid y)$,
$\Theta_{1:T}^{1}\sim q(\cdot\mid y)$. Set $z_{0}:=\Phi(\Theta_{1:T}^{0})$,
$z_{1}:=\Phi(\Theta_{1:T}^{1})$, and for $s\in[0,1]$,
\begin{equation}
\label{eq:zt-def}
  z_{s}:=(1-s)z_{0}+s\,z_{1}\in E,
  \qquad
  \gamma(s):=\Phi^{-1}(z_{s})\in\Mtraj.
\end{equation}

\emph{$\gamma$ is the geodesic on $(\Mtraj,\Phi^{*}g_{E})$.}
On each factor, $\gamma_{i}(s)=\phi^{-1}((1-s)\phi(\Theta_{i}^{0})+s\,\phi(\Theta_{i}^{1}))$
is the log-Euclidean geodesic from $\Theta_{i}^{0}$ to $\Theta_{i}^{1}$
(\citealp[Eq.~14]{collas2025} applied to $\phi$). Geodesics on a
product Riemannian manifold equipped with the product metric decouple into
component-wise geodesics, so $\gamma$ is the geodesic on
$(\Mtraj,\Phi^{*}g_{E})$ connecting $\Theta_{1:T}^{0}$ to $\Theta_{1:T}^{1}$.

\emph{Velocity.}
The straight line $z_{s}$ in $E$ has constant derivative $\dot z_{s}=z_{1}-z_{0}$.
Applying the chain rule to $\gamma=\Phi^{-1}\circ z$,
\begin{equation}
\label{eq:gamma-dot}
  \dot\gamma(s)
  \;=\;D\Phi^{-1}(z_{s})\bigl[\dot z_{s}\bigr]
  \;=\;D\Phi^{-1}(z_{s})\bigl[z_{1}-z_{0}\bigr].
\end{equation}

At every $s$, the pullback metric satisfies
\begin{equation}
\label{eq:pullback-norm}
  \|\zeta\|_{\gamma(s)}^{2}
  \;=\;(\Phi^{*}g_{E})_{\gamma(s)}(\zeta,\zeta)
  \;=\;\bigl\|D\Phi(\gamma(s))[\zeta]\bigr\|_{E}^{2}
  \quad\text{for all }\zeta\in T_{\gamma(s)}\Mtraj.
\end{equation}
Set $\zeta:=u^{\Mtraj}_{\theta}(s,\gamma(s),y)-\dot\gamma(s)$. By linearity of
the differential,
\[
  D\Phi(\gamma(s))[\zeta]
  =D\Phi(\gamma(s))\bigl[u^{\Mtraj}_{\theta}(s,\gamma(s),y)\bigr]
   -D\Phi(\gamma(s))[\dot\gamma(s)].
\]
The first term is $u^{E}_{\theta}(s,z_{s},y)$ by
\eqref{eq:pullback-field-traj} (using $\gamma(s)=\Phi^{-1}(z_{s})$). For the
second term, \eqref{eq:gamma-dot} gives
$D\Phi(\gamma(s))[\dot\gamma(s)]
=D\Phi(\gamma(s))\circ D\Phi^{-1}(z_{s})\bigl[z_{1}-z_{0}\bigr]
=z_{1}-z_{0}$,
where the last equality is the left identity in \eqref{eq:dphi-inverse}.
Combining,
\begin{equation}
\label{eq:integrand-id}
  D\Phi(\gamma(s))\bigl[u^{\Mtraj}_{\theta}(s,\gamma(s),y)-\dot\gamma(s)\bigr]
  \;=\;u^{E}_{\theta}(s,z_{s},y)-(z_{1}-z_{0}).
\end{equation}
Substituting into \eqref{eq:pullback-norm}:
\begin{equation}
\label{eq:pointwise-cfm}
  \bigl\|u^{\Mtraj}_{\theta}(s,\gamma(s),y)-\dot\gamma(s)\bigr\|_{\gamma(s)}^{2}
  \;=\;\bigl\|u^{E}_{\theta}(s,z_{s},y)-(z_{1}-z_{0})\bigr\|_{E}^{2}.
\end{equation}

\emph{Expectation.}
\eqref{eq:pointwise-cfm} holds pointwise for every realisation of
$(s,y,\Theta_{1:T}^{0},\Theta_{1:T}^{1})$. Taking expectation under
$s\sim\mathcal{U}[0,1]$, $y\sim\pi_{Y}$, $\Theta_{1:T}^{0}\sim p(\cdot\mid y)$,
$\Theta_{1:T}^{1}\sim q(\cdot\mid y)$, and using the substitution
$z_{0}=\Phi(\Theta_{1:T}^{0})$, $z_{1}=\Phi(\Theta_{1:T}^{1})$
(whose marginal laws are $\Phi_{\#}p(\cdot\mid y)$ and $\Phi_{\#}q(\cdot\mid y)$
respectively) yields \eqref{eq:traj-cfm-loss}.
\end{proof}

\subsection{Proof of Proposition~\ref{prop:traj-ode} (trajectory ODE equivalence)}
\label{app:proof-traj-ode}

\begin{proof}[Proof of Proposition~\ref{prop:traj-ode}]
\emph{Time derivative.}
By the chain rule applied to $\Theta_{1:T}=\Phi^{-1}\circ z$,
\[
  \dot\Theta_{1:T}(s)
  \;=\;D\Phi^{-1}(z(s))[\dot z(s)]
  \;=\;D\Phi^{-1}(z(s))\bigl[u^{E}_{\theta}(s,z(s),y)\bigr],
\]
where the second equality uses the ODE for $z$. Substituting
$z(s)=\Phi(\Theta_{1:T}(s))$ and using the definition
\eqref{eq:pullback-field-traj},
\[
  u^{E}_{\theta}(s,\Phi(\Theta_{1:T}(s)),y)
  =D\Phi(\Theta_{1:T}(s))\bigl[u^{\Mtraj}_{\theta}(s,\Theta_{1:T}(s),y)\bigr].
\]
Therefore
\[
  \dot\Theta_{1:T}(s)
  =D\Phi^{-1}\bigl(\Phi(\Theta_{1:T}(s))\bigr)\circ D\Phi(\Theta_{1:T}(s))\bigl[u^{\Mtraj}_{\theta}(s,\Theta_{1:T}(s),y)\bigr]
  =u^{\Mtraj}_{\theta}(s,\Theta_{1:T}(s),y),
\]
by the right identity in \eqref{eq:dphi-inverse}.

\emph{Initial condition.}
$\Theta_{1:T}(0)=\Phi^{-1}(z(0))=\Phi^{-1}(\Phi(\Theta_{1:T}^{0}))=\Theta_{1:T}^{0}$.

\emph{Slice-wise SPD.}
For each $i$, $\Theta_{i}(s)=\phi^{-1}(z_{i}(s))=\exp(\veclt^{-1}(z_{i}(s)))$.
Since $\veclt^{-1}:\R^{d}\to\Sym$ is a linear isomorphism onto the
symmetric matrices, $\veclt^{-1}(z_{i}(s))\in\Sym$. The matrix
exponential restricts to a bijection $\exp:\Sym\to\Spd$
(diagonalise: $S=Q\Lambda Q^{\top}$ implies $\exp(S)=Q\exp(\Lambda)Q^{\top}$
with positive eigenvalues), so $\Theta_{i}(s)\in\Spd$ for every $i,s$.
\end{proof}

\subsection{Proof of Proposition~\ref{prop:traj-rk} (trajectory Runge--Kutta equivalence)}
\label{app:proof-traj-rk}

Proposition~\ref{prop:traj-rk} is stated in Section~\ref{sec:flow}; the scheme
has $\nu$ stages with Butcher tableau $(a_{ij},b_{i},c_{i})$ and step size $h$.

\begin{proof}[Proof of Proposition~\ref{prop:traj-rk}]
\emph{Block-diagonal structure of all manifold operations.}
On $(\Mtraj,\Phi^{*}g_{E})$ all the geometric operations used by an explicit
Riemannian Runge--Kutta scheme, namely the differential $D\Phi$ (already
established as block-diagonal in \eqref{eq:dphi-block}), its inverse
$D\Phi^{-1}$, the exponential map
$\expmap_{\Theta_{1:T}}(\zeta)=\Phi^{-1}(\Phi(\Theta_{1:T})+D\Phi(\Theta_{1:T})[\zeta])$,
and parallel transport
$\mathrm{PT}_{x\to y}(\eta)=(D\Phi(y))^{-1}(D\Phi(x)[\eta])$,
all decompose into the corresponding operations on each factor $\Spd$.
This is because the product metric is the sum of factor-wise metrics, the
diffeomorphism is the Cartesian product of factor diffeomorphisms, and the
exponential/parallel-transport formulas
\citep[Eqs.~15--16]{collas2025} are determined pointwise by $D\Phi$.

\emph{Reduction to the single-matrix case.}
The Riemannian Runge--Kutta scheme on $\Mtraj$ at step $\ell$ evaluates
$u^{\Mtraj}_{\theta}$ at $\nu$ intermediate points
$\{x_{j}^{\ell}\}_{j=1}^{\nu}\subset\Mtraj$, parallel-transports each tangent
vector back to $T_{\Theta_{1:T}^{\ell}}\Mtraj$, takes a linear combination with
weights $b_{i}$, and applies the exponential map at $\Theta_{1:T}^{\ell}$. By
the block-diagonal structure above, this procedure executes
independently and simultaneously on each of the $T$ factors $\Spd$.
On factor $i$, the iterates are exactly those produced by the same scheme
applied to the single-matrix Riemannian ODE
$\dot\Theta_{i}=u^{\Spd}_{\theta,i}$ with initial condition $\Theta_{i}^{0}$,
where $u^{\Spd}_{\theta,i}$ is the $i$-th block of $u^{\Mtraj}_{\theta}$.
Proposition~3 of \citet{collas2025} applied to this $i$-th factor
yields
\[
  \Theta_{i}^{\ell}=\phi^{-1}(z_{i}^{\ell})
  \quad\text{for all }\ell\in\N,
\]
where $\{z_{i}^{\ell}\}$ is the corresponding Euclidean RK trajectory on
the $i$-th block. Stacking across $i=1,\dots,T$ gives
\eqref{eq:rk-equivalence-traj}.

\emph{Slice-wise SPD.}
Slice-wise SPD validity follows from
$\Theta_{i}^{\ell}=\phi^{-1}(z_{i}^{\ell})\in\Spd$ for every $\ell$, as in
the proof of Proposition~\ref{prop:traj-ode}.
\end{proof}

\subsection{Proof of Proposition~\ref{prop:standardisation} (standardised diffeomorphism)}
\label{app:proof-standardisation}

In Section~\ref{sec:embedding}, each log-Euclidean coordinate is standardised
using training statistics $\mu\in\R^{d}$ and $\sigma\in\R^{d}_{>0}$ broadcast
across the trajectory index. Define
\begin{equation}
\label{eq:Psi-def}
  \Psi:E\to E,\qquad
  \Psi(z)_{i,k}:=\frac{z_{i,k}-\mu_{k}}{\sigma_{k}},
  \quad
  \Psi^{-1}(\tilde z)_{i,k}=\mu_{k}+\sigma_{k}\tilde z_{i,k}.
\end{equation}

\begin{proof}[Proof of Proposition~\ref{prop:standardisation}]
\emph{Diffeomorphism.}
$\Psi$ is an affine bijection of $E$ with constant Jacobian
$D\Psi(z)=\diag(\sigma_{1}^{-1},\dots,\sigma_{d}^{-1})$ broadcast across the
trajectory index $i$; since $\sigma_{k}>0$, both $\Psi$ and $\Psi^{-1}$ are
smooth. A composition of two global diffeomorphisms is a global
diffeomorphism, so $\tildePhi=\Psi\circ\Phi$ is one.

\emph{Pullback metric.}
By the chain rule,
$D\tildePhi(\Theta_{1:T})=D\Psi(\Phi(\Theta_{1:T}))\circ D\Phi(\Theta_{1:T})$.
For $\xi,\eta\in T_{\Theta_{1:T}}\Mtraj$,
\[
\begin{aligned}
  (\tildePhi^{*}g_{E})_{\Theta_{1:T}}(\xi,\eta)
  &=g_{E}\bigl(D\Psi(D\Phi[\xi]),\,D\Psi(D\Phi[\eta])\bigr)\\
  &=\sum_{i,k}\sigma_{k}^{-1}[D\Phi[\xi]]_{i,k}\,\sigma_{k}^{-1}[D\Phi[\eta]]_{i,k}\\
  &=\sum_{i=1}^{T}\sum_{k=1}^{d}\sigma_{k}^{-2}\bigl[D\phi(\Theta_{i})[\xi_{i}]\bigr]_{k}\,
                                       \bigl[D\phi(\Theta_{i})[\eta_{i}]\bigr]_{k},
\end{aligned}
\]
where the last equality uses the block-diagonal form \eqref{eq:dphi-block}.
This is \eqref{eq:standardised-metric}.

Propositions~\ref{prop:traj-cfm}--\ref{prop:traj-rk} were proved using only
two ingredients: $\Phi$ is a global diffeomorphism, and the pullback metric is
$\Phi^{*}g_{E}$. Both ingredients hold for $\tildePhi$, with the pullback
metric now \eqref{eq:standardised-metric}. The identities
\eqref{eq:dphi-inverse} extend to $\tildePhi$ since they are general
chain-rule facts about any global diffeomorphism. Hence the proofs go through
verbatim with $\Phi\to\tildePhi$.
\end{proof}

\begin{remark}[Why standardisation matters in practice]
Although $\tildePhi$ is geometrically equivalent to $\Phi$ under the rescaled
metric \eqref{eq:standardised-metric}, the choice of embedding matters in
practice because the CFM source distribution $\mathcal{N}(0,I)$ is defined
with respect to the unrescaled Euclidean inner product on $E$. The
standardisation \eqref{eq:Psi-def} aligns the source covariance with the
per-coordinate variance of the target, removing the under-dispersion observed
in the ablation paragraph of Section~\ref{sec:exp}.
\end{remark}

\subsection{Proposition~\ref{prop:forecast-cfm} (CFM equivalence under arbitrary couplings) and its proof}
\label{app:proof-forecast-cfm}

The forecasting variant of \tvglcfm\ replaces the standard Gaussian source by
the history-informed warm-start distribution $\qrw(\cdot\mid z_{1:K})$ of
Eq.~\eqref{eq:rwprior}. The following two results show this preserves the
Riemannian/Euclidean equivalence (Proposition~\ref{prop:forecast-cfm}) and yields valid SPD
trajectories (Proposition~\ref{prop:warmstart-valid}).

\begin{proposition}[CFM equivalence under arbitrary couplings]
\label{prop:forecast-cfm}
Fix a history $z_{1:K}\in E^{K}$ and a label $y$. Let
$\pi(z_{0}^{f},z_{1}^{f}\mid z_{1:K},y)$ be any joint probability measure on
$\Ef\times\Ef$ with finite second moment, and let
$u^{\Ef}_{\theta}(s,z^{f},z_{1:K},y)$ be the Euclidean field corresponding,
via the pullback identity
\begin{equation}
\label{eq:pullback-field-fut}
  u^{\Ef}_{\theta}(s,z^{f},z_{1:K},y)
  :=D\tildePhif\bigl(\tildePhif^{-1}(z^{f})\bigr)
   \bigl[u^{\Mtrajfut}_{\theta}(s,\tildePhif^{-1}(z^{f}),z_{1:K},y)\bigr],
\end{equation}
to a Riemannian field $u^{\Mtrajfut}_{\theta}$ on
$(\Mtrajfut,\tildePhif^{*}g_{\Ef})$. Then the Riemannian CFM loss along the
component-wise log-Euclidean geodesic $\gamma^{f}$ between
$\tildePhif^{-1}(z_{0}^{f})$ and $\tildePhif^{-1}(z_{1}^{f})$ equals
\begin{equation}
\label{eq:forecast-cfm-eucl}
  \Lforecast^{\Mtrajfut}(\theta)
  \;=\;\E_{s,\,y,\,z_{1:K},\,(z_{0}^{f},z_{1}^{f})\sim\pi}\!\bigl\|
   u^{\Ef}_{\theta}\bigl(s,(1-s)z_{0}^{f}+s\,z_{1}^{f},z_{1:K},y\bigr)
   -(z_{1}^{f}-z_{0}^{f})\bigr\|_{\Ef}^{2}.
\end{equation}
The equivalence holds in particular for the independent Gaussian source
$\pi(z_{0}^{f},z_{1}^{f}\mid z_{1:K},y)=\mathcal{N}(0,I)(z_{0}^{f})\,p_{\mathrm{data}}(z_{1}^{f}\mid z_{1:K},y)$
and for the warm-start coupling
$\pi(z_{0}^{f},z_{1}^{f}\mid z_{1:K},y)=\qrw(z_{0}^{f}\mid z_{1:K})\,p_{\mathrm{data}}(z_{1}^{f}\mid z_{1:K},y)$.
\end{proposition}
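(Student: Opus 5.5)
The plan is to repeat the pointwise argument in the proof of Proposition~\ref{prop:traj-cfm} on the future block $\Mtrajfut$ with the standardised chart $\tildePhif$. Then we integrate against the arbitrary coupling $\pi$ rather than a product measure. The key point is that the identity from that proof, \eqref{eq:pointwise-cfm}, holds for \emph{every} pair of endpoints and every $s$. It never uses independence of $z_0$ and $z_1$, so the coupling only enters at the final expectation.

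\textbf{Step 1: geometry of the future block.} By Proposition~\ref{prop:prod-diffeo} applied with $T-K$ factors in place of $T$, $\Phif$ is a global diffeomorphism $\Mtrajfut\to\Ef$. Proposition~\ref{prop:standardisation} then shows that $\tildePhif=\Psi\circ\Phif$ is also a global diffeomorphism. Its pullback metric $\tildePhif^{*}g_{\Ef}$ is the per-coordinate rescaling \eqref{eq:standardised-metric}, and the chain-rule identities \eqref{eq:dphi-inverse} hold for $\tildePhif$.

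\textbf{Step 2: the pointwise identity.} Fix $s$, $y$, $z_{1:K}$ and a realisation $(z_0^f,z_1^f)$ in the support of $\pi$. Set $z^f_s=(1-s)z^f_0+sz^f_1$ and $\gamma^f(s)=\tildePhif^{-1}(z^f_s)$. Because $\tildePhif$ is affine-composed-with-product, $\gamma^f$ is the straight line in the chart pulled back, which is the component-wise geodesic for $\tildePhif^{*}g_{\Ef}$. This is the same argument as in the proof of Proposition~\ref{prop:traj-cfm}, since the metric is flat in these coordinates. The chain rule gives $\dot\gamma^f(s)=D\tildePhif^{-1}(z^f_s)[z^f_1-z^f_0]$. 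Next apply $D\tildePhif(\gamma^f(s))$ to $u^{\Mtrajfut}_\theta-\dot\gamma^f$ and use definition \eqref{eq:pullback-field-fut}. The history $z_{1:K}$ and the label $y$ are passive parameters of the field, so they do not affect any differential. Together with the left identity of \eqref{eq:dphi-inverse} this yields
\[
\bigl\|u^{\Mtrajfut}_\theta(s,\gamma^f(s),z_{1:K},y)-\dot\gamma^f(s)\bigr\|^2_{\gamma^f(s)}=\bigl\|u^{\Ef}_\theta(s,z^f_s,z_{1:K},y)-(z^f_1-z^f_0)\bigr\|^2_{\Ef}.
\]

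\textbf{Step 3: integration.} The Riemannian forecasting loss is defined as the expectation of the left-hand side over $s\sim\mathcal U[0,1]$, $(y,z_{1:K})$, and $(z^f_0,z^f_1)\sim\pi(\cdot\mid z_{1:K},y)$. Integrating the pointwise identity against the same measure gives \eqref{eq:forecast-cfm-eucl}.

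The main obstacle is measure-theoretic: we must make sure both expectations are well defined and finite. The target $z^f_1-z^f_0$ is square integrable by the finite-second-moment assumption on $\pi$. For the field term I would either assume $u_\theta$ has at most linear growth, which is true for the network, or observe that the identity holds in $[0,\infty]$ by Tonelli, since both sides are equal nonnegative measurable functions. In that case equality of the expectations holds even if they are infinite.

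The two special cases follow by checking the hypothesis. The Gaussian source has finite second moments. The warm-start $\qrw$ is Gaussian given $z_{1:K}$: a random walk with deterministic drift $\alpha_\mu\widehat\mu$ and finite covariance built from $\alpha_\sigma\widehat\sigma$. So its product with $p_{\mathrm{data}}$ has finite second moment whenever the data do, which holds for standardised targets.
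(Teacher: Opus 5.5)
Your proposal is correct and follows the paper's proof essentially step for step. You transfer the pointwise identity from the proof of Proposition~\ref{prop:traj-cfm} to the future block via $\tildePhif$, observe that it holds for every endpoint pair so that independence of source and target is never used, and integrate against the arbitrary coupling $\pi$. Your additional remarks are small refinements the paper leaves implicit: equality of the two nonnegative integrands gives equality of the expectations in $[0,\infty]$ without any integrability assumption, and both the Gaussian and warm-start couplings have finite second moments.
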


\begin{proof}[Proof of Proposition~\ref{prop:forecast-cfm}]
The argument is the trajectory analogue of Proposition~\ref{prop:traj-cfm},
restricted to the future block and conditioned on the history, with the
crucial observation that \emph{nothing in the proof requires
$z_{0}^{f}\perp z_{1}^{f}$}.

\emph{Pointwise integrand identity.}
Fix any realisation $(s,y,z_{1:K},z_{0}^{f},z_{1}^{f})$ with
$z_{0}^{f},z_{1}^{f}\in\Ef$. Let
\begin{equation}
\label{eq:future-zt-gamma}
  z_{s}^{f}:=(1-s)z_{0}^{f}+s\,z_{1}^{f},
  \qquad
  \gamma^{f}(s):=\tildePhif^{-1}(z_{s}^{f}).
\end{equation}
Since $\tildePhif$ is a global diffeomorphism with pullback metric on
$\Mtrajfut$ (Proposition~\ref{prop:standardisation} applied to the future
block), $\gamma^{f}$ is the geodesic on
$(\Mtrajfut,\tildePhif^{*}g_{\Ef})$ between
$\tildePhif^{-1}(z_{0}^{f})$ and $\tildePhif^{-1}(z_{1}^{f})$. Its velocity is
$\dot\gamma^{f}(s)=D\tildePhif^{-1}(z_{s}^{f})[z_{1}^{f}-z_{0}^{f}]$.

By exactly the same calculation as
\eqref{eq:pullback-norm}--\eqref{eq:integrand-id}, applied to $\tildePhif$
and the field \eqref{eq:pullback-field-fut} on the future block,
\begin{multline}
\label{eq:pointwise-fut}
  \bigl\|u^{\Mtrajfut}_{\theta}(s,\gamma^{f}(s),z_{1:K},y)-\dot\gamma^{f}(s)\bigr\|_{\gamma^{f}(s)}^{2}
  \\=\bigl\|u^{\Ef}_{\theta}(s,z_{s}^{f},z_{1:K},y)-(z_{1}^{f}-z_{0}^{f})\bigr\|_{\Ef}^{2}.
\end{multline}

\emph{Independence is never used.}
Crucially, \eqref{eq:pointwise-fut} holds for \emph{every} fixed pair
$(z_{0}^{f},z_{1}^{f})$, with no probabilistic structure on this pair. The
proof uses
only the chain rule, the isometry of $D\tildePhif$ between tangent spaces,
and the algebraic identity
$D\tildePhif\circ D\tildePhif^{-1}=\mathrm{Id}_{\Ef}$. The conditioning
variable $z_{1:K}$ enters only as an argument of the velocity field on both
sides of \eqref{eq:pointwise-fut} and does not affect the pullback geometry.

\emph{Expectation under any joint $\pi$.}
Taking expectations of both sides of \eqref{eq:pointwise-fut} under
$s\sim\mathcal{U}[0,1]$, $y\sim\pi_{Y}$,
$z_{1:K}\sim\pi_{\mathrm{hist}}$, and $(z_{0}^{f},z_{1}^{f})\sim\pi(\cdot,\cdot\mid z_{1:K},y)$,
the left-hand side becomes the Riemannian CFM loss on $\Mtrajfut$ with source
and target marginals
$(\tildePhif^{-1})_{\#}\pi_{0}(\cdot\mid z_{1:K},y)$ and
$(\tildePhif^{-1})_{\#}\pi_{1}(\cdot\mid z_{1:K},y)$ respectively (where
$\pi_{0},\pi_{1}$ are the two marginals of $\pi$), while the right-hand side
is \eqref{eq:forecast-cfm-eucl}.

\end{proof}

\subsection{Proposition~\ref{prop:warmstart-valid} (warm-start prior yields valid SPD trajectories) and its proof}
\label{app:proof-warmstart-valid}

\begin{proposition}[Warm-start prior yields valid SPD trajectories]
\label{prop:warmstart-valid}
For any history $z_{1:K}\in E^{K}$ and any choice of drift/noise parameters
$(\hatmu,\hatsigma,\alpha_{\mu},\alpha_{\sigma})$, the warm-start random walk
$\{\tilde z^{f}_{0,q}\}_{q=1}^{T-K}$ of Eq.~\eqref{eq:rwprior} defines a
probability measure on $\Ef$ whose pushforward under
$\tildePhif^{-1}=\Phif^{-1}\circ\Psi^{-1}$ is supported on $\Mtrajfut$:
every realisation is slice-wise SPD.
\end{proposition}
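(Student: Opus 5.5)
The plan is to show two things separately: first that the recursion \eqref{eq:rwprior} defines a genuine probability measure on $\Ef$, and then that every point of $\Ef$, hence every realisation, decodes to a slice-wise SPD trajectory. For the first, note that once the history $z_{1:K}$ is fixed, $\hatmu$ and $\hatsigma$ are deterministic vectors in $\R^{d}$ (finite, since they are built from finitely many real increments; and $\sigma_{\min}>0$ keeps $\hatsigma$ positive). Unrolling the recursion gives
\[
\tilde z^{f}_{0,q}=z_{K}+q\,\alpha_{\mu}\hatmu+\alpha_{\sigma}\hatsigma\odot\textstyle\sum_{j=1}^{q}\epsilon_{j},\qquad q=1,\dots,T-K .
\]
So $z_{0}^{f}$ is a fixed affine image of the i.i.d.\ Gaussian vector $(\epsilon_{1},\dots,\epsilon_{T-K})$. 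It is therefore a measurable map into $\Ef=\R^{(T-K)\times d}$, and in fact Gaussian, possibly degenerate if $\alpha_{\sigma}=0$. Its law is a Borel probability measure on $\Ef$ with finite second moment, which also makes it an admissible source marginal in Proposition~\ref{prop:forecast-cfm}.

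For the second part I would use Proposition~\ref{prop:standardisation}, applied to the future block. It gives that $\tildePhif=\Psi\circ\Phif$ is a global diffeomorphism $\Mtrajfut\to\Ef$, so $\tildePhif^{-1}=\Phif^{-1}\circ\Psi^{-1}$ is defined on all of $\Ef$ and is continuous. The pushforward measure is then well defined, and its support lies in $\tildePhif^{-1}(\Ef)=\Mtrajfut$. To make "every realisation is slice-wise SPD" explicit, I would write the decoded $q$-th window as $\exp\bigl(\veclt^{-1}(\sigma\odot\tilde z^{f}_{0,q}+\mu)\bigr)$. Since $\veclt^{-1}$ maps into $\Sym$ and $\exp$ maps $\Sym$ into $\Spd$, this follows exactly as in the slice-wise SPD step of the proof of Proposition~\ref{prop:traj-ode}.

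The main subtlety is not analytic. It is noting that no realisation can escape the chart, because the chart's image is the whole Euclidean space. In particular, arbitrary drift and noise parameters, including negative $\alpha_{\mu}$ and large $\alpha_{\sigma}$, cannot produce a non-SPD matrix. The only point needing care is that the standardiser satisfies $\sigma_{k}>0$, which is the hypothesis of Proposition~\ref{prop:standardisation}, so I would state that assumption explicitly.
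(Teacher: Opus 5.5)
Your proof is correct and follows the paper's argument. The paper likewise checks that $\Psi^{-1}$ is a bijection of $\Ef$ (using $\sigma_k>0$) and that $\exp\circ\veclt^{-1}$ sends every vector in $\R^{d}$ into $\Spd$ with no constraint on its argument. It then concludes, slice by slice, that every realisation decodes into $\Mtrajfut$ whatever $(\hatmu,\hatsigma,\alpha_\mu,\alpha_\sigma)$ are.

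You also unroll the recursion into a fixed affine image of the Gaussian noise, giving a Borel, possibly degenerate, Gaussian law with finite second moment. The paper leaves this implicit, and it usefully confirms that the warm start is an admissible source for Proposition~\ref{prop:forecast-cfm}.
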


\begin{proof}[Proof of Proposition~\ref{prop:warmstart-valid}]
We show, with no conditions on the realisation of the random walk, that every
$\tilde z\in\Ef$ pushes forward to an element of $\Mtrajfut$.

\emph{Step 1: $\Psi^{-1}$ is a bijection of $\Ef$.}
The affine map $\Psi^{-1}(\tilde z)_{i,k}=\mu_{k}+\sigma_{k}\tilde z_{i,k}$
with $\sigma_{k}>0$ has explicit inverse
$\Psi(z)_{i,k}=(z_{i,k}-\mu_{k})/\sigma_{k}$. Both are continuous and smooth
on $\Ef$. Hence $\Psi^{-1}(\tilde z)\in\Ef$ for every $\tilde z\in\Ef$.

\emph{Step 2: $\phi^{-1}(\eta)\in\Spd$ for every $\eta\in\R^{d}$.}
$\veclt^{-1}:\R^{d}\to\Sym$ is a linear isomorphism onto the
space of symmetric $p\times p$ matrices, so $\veclt^{-1}(\eta)\in\Sym$
for any $\eta$. For any symmetric $S\in\Sym$ with spectral
decomposition $S=Q\Lambda Q^{\top}$ (with $Q\in O(p)$ and
$\Lambda=\diag(\lambda_{1},\dots,\lambda_{p})$ real), the matrix exponential
is
\[
  \exp(S)=Q\diag\bigl(e^{\lambda_{1}},\dots,e^{\lambda_{p}}\bigr)Q^{\top}.
\]
This is symmetric (since $Q^{\top}\!\!\cdot Q=I$) and has strictly positive
eigenvalues $e^{\lambda_{i}}>0$, hence $\exp(S)\in\Spd$. Therefore
$\phi^{-1}(\eta)=\exp(\veclt^{-1}(\eta))\in\Spd$ for any $\eta\in\R^{d}$.
\emph{No magnitude, sign, or positivity constraint on $\eta$ is required.}

\emph{Step 3: Slice-wise composition.}
The pushforward acts component-wise:
\[
  \bigl(\tildePhif^{-1}(\tilde z)\bigr)_{i}
  \;=\;\phi^{-1}\bigl(\Psi^{-1}(\tilde z)_{i,\cdot}\bigr)
  \;\in\;\Spd
  \qquad\text{for }i=1,\dots,T-K,
\]
by Steps~1 and 2. Stacking across $i$ gives an element of $\Mtrajfut$.

\emph{Step 4: Independence from random-walk realisation.}
The above is unconditional on the realised value of
$\{\tilde z^{f}_{0,q}\}_{q=1}^{T-K}$. The random-walk recursion
\eqref{eq:rwprior} produces some sample in $\Ef$ for any choice of
$(\hatmu,\hatsigma,\alpha_{\mu},\alpha_{\sigma})$ and any realisation of
$\{\epsilon_{q}\}$, and every element of $\Ef$ pushes forward to $\Mtrajfut$.
Hence every realised warm-start sample yields a valid SPD trajectory.
\end{proof}

\begin{remark}[Status of the warm start]
\label{rem:warmstart-status}
Proposition~\ref{prop:warmstart-valid} establishes only that $\qrw$ is a
geometrically valid \emph{source} for CFM, not that it is a good
approximation of the true future distribution. By
Proposition~\ref{prop:forecast-cfm}, the trained flow transports this
rough prior toward the data distribution; the empirical gap between the
uncorrected prior and the flow-corrected forecast in
Tables~\ref{tab:forecast}--\ref{tab:forecast_all} quantifies the share
of forecasting accuracy attributable to the learned transport rather than to
the extrapolation heuristic alone.
\end{remark}

\subsection{Proof of Proposition~\ref{prop:boundary} (boundary term as squared log-Euclidean distance)}
\label{app:proof-boundary}

The losses of Eqs.~\eqref{eq:bnd} and \eqref{eq:loss} are written in the
standardised embedding. We identify their counterparts in the
log-Euclidean geometry on $\Spd$ (Proposition~\ref{prop:boundary} for the boundary term,
Proposition~\ref{prop:temporal} for the temporal regulariser).

\begin{proposition}[Boundary term as squared log-Euclidean distance]
\label{prop:boundary}
Following the subscript convention of Section~\ref{sec:method} (first index: flow time;
second index: window index), let $\Theta_{K+1}:=\tildePhi^{-1}(z^{f}_{1,1})\in\Spd$ and
$\hatTheta_{K+1}:=\tildePhi^{-1}(\hat z^{f}_{1,1})\in\Spd$ be the decoded true and
predicted first-future precision matrices (window index $i=K+1$, flow time $s=1$). Then
\begin{equation}
\label{eq:bnd-as-le}
  \bigl\|\hat z^{f}_{1,1}-z^{f}_{1,1}\bigr\|_{2}^{2}
  \;=\;\sum_{k=1}^{d}\sigma_{k}^{-2}\bigl(\phi(\hatTheta_{K+1})_{k}-\phi(\Theta_{K+1})_{k}\bigr)^{2},
\end{equation}
which reduces, when $\sigma_{k}\equiv 1$, to the squared log-Euclidean
distance $\dLE^{2}(\hatTheta_{K+1},\Theta_{K+1})$.
\end{proposition}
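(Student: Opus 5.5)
The plan is a direct coordinate computation that only unwinds the definitions of $\tildePhi=\Psi\circ\Phi$ and of the standardisation $\Psi$ in \eqref{eq:Psi-def}. The first step is to express both endpoints in raw log-Euclidean coordinates. By definition $\Theta_{K+1}=\tildePhi^{-1}(z^{f}_{1,1})$, and $\tildePhi^{-1}$ acts window-wise as $\phi^{-1}\circ\Psi^{-1}$ (the slice-wise composition from Step~3 of the proof of Proposition~\ref{prop:warmstart-valid}). Applying $\phi$ and using that $\phi\circ\phi^{-1}=\mathrm{Id}$ (Proposition~\ref{prop:prod-diffeo}) gives
\[
\phi(\Theta_{K+1})_{k}=\mu_{k}+\sigma_{k}\,(z^{f}_{1,1})_{k}.
\]
Likewise $\phi(\hatTheta_{K+1})_{k}=\mu_{k}+\sigma_{k}\,(\hat z^{f}_{1,1})_{k}$, for every $k=1,\dots,d$. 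Here $\mu,\sigma$ are broadcast across windows, so the same $(\mu_k,\sigma_k)$ apply at window $K+1$.

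Next I subtract the two expressions. The offsets $\mu_k$ cancel, which leaves
\[
(\hat z^{f}_{1,1})_{k}-(z^{f}_{1,1})_{k}=\sigma_{k}^{-1}\bigl(\phi(\hatTheta_{K+1})_{k}-\phi(\Theta_{K+1})_{k}\bigr).
\]
This division is legitimate because $\sigma_k>0$, the standing assumption of Proposition~\ref{prop:standardisation}. Squaring this and summing over $k$ yields \eqref{eq:bnd-as-le}.

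For the reduction, set $\sigma_k\equiv1$. The right-hand side of \eqref{eq:bnd-as-le} then becomes $\|\phi(\hatTheta_{K+1})-\phi(\Theta_{K+1})\|_2^2$. By the $\veclt$ isometry \eqref{eq:veclt-isometry} this equals $\|\log\hatTheta_{K+1}-\log\Theta_{K+1}\|_F^2=\dLE^2(\hatTheta_{K+1},\Theta_{K+1})$. The same computation, carried out window by window, is also the template for Proposition~\ref{prop:temporal}.

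The argument has essentially no obstacle. The only point needing care is bookkeeping. I must check that the standardisation statistics are indexed by coordinate $k$ and not by window, so that the cancellation of $\mu_k$ is exact. I must also check that the predicted endpoint $\hat z^f_{1,1}$ lies in $\Ef$, which holds for any real vector, so that $\hatTheta_{K+1}$ is well defined and SPD by Proposition~\ref{prop:warmstart-valid}, Step~2.
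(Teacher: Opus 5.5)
Your proof is correct and follows the paper's argument essentially verbatim. The paper also uses the diagonal affine action of $\Psi^{-1}$ and identifies $\Psi^{-1}(\hat z^{f}_{1,1})=\phi(\hatTheta_{K+1})$ and $\Psi^{-1}(z^{f}_{1,1})=\phi(\Theta_{K+1})$ slice-wise; it then divides by $\sigma_k$, squares and sums, and invokes the $\veclt$ isometry for the $\sigma_k\equiv 1$ case, so your explicit cancellation of the offsets $\mu_k$ is just the unpacked form of its difference identity \eqref{eq:psi-diff}.
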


\begin{proof}[Proof of Proposition~\ref{prop:boundary}]
The map $\Psi^{-1}$ is affine with diagonal action:
$\Psi^{-1}(\tilde z)_{i,k}-\Psi^{-1}(\tilde z')_{i,k}=\sigma_{k}(\tilde z_{i,k}-\tilde z'_{i,k})$
component-wise. Applied to $\tilde z=\hat z^{f}_{1,1}$ and $\tilde z'=z^{f}_{1,1}$
(both elements of $\R^{d}$, indexed by $k$),
\begin{equation}
\label{eq:psi-diff}
  \bigl(\Psi^{-1}(\hat z^{f}_{1,1})-\Psi^{-1}(z^{f}_{1,1})\bigr)_{k}
  \;=\;\sigma_{k}\bigl(\hat z^{f}_{1,1}-z^{f}_{1,1}\bigr)_{k}.
\end{equation}
The unstandardised differences are precisely the log-Euclidean residuals:
$\Psi^{-1}(\hat z^{f}_{1,1})=\phi(\hatTheta_{K+1})$ and
$\Psi^{-1}(z^{f}_{1,1})=\phi(\Theta_{K+1})$, by definition of $\tildePhi^{-1}$
and the fact that $\tildePhi^{-1}=\phi^{-1}\circ\Psi^{-1}$ applied
slice-by-slice. Therefore
\[
  \sigma_{k}\bigl(\hat z^{f}_{1,1}-z^{f}_{1,1}\bigr)_{k}
  \;=\;\phi(\hatTheta_{K+1})_{k}-\phi(\Theta_{K+1})_{k},
\]
and squaring and dividing by $\sigma_{k}^{2}$ gives the $k$-th term of
\eqref{eq:bnd-as-le}; summing over $k$ yields the identity. When
$\sigma_{k}\equiv 1$, the right-hand side equals
$\|\phi(\hatTheta_{K+1})-\phi(\Theta_{K+1})\|_{2}^{2}
=\|\log\hatTheta_{K+1}-\log\Theta_{K+1}\|_{F}^{2}
=\dLE^{2}(\hatTheta_{K+1},\Theta_{K+1})$
by \eqref{eq:veclt-isometry}.
\end{proof}

\subsection{Proof of Proposition~\ref{prop:temporal} (temporal regulariser as Riemannian smoothness)}
\label{app:proof-temporal}

\begin{proof}[Proof of Proposition~\ref{prop:temporal}]
(i) By the diagonal action of $\Psi^{-1}$ (Eq.~\eqref{eq:psi-diff}), applied
slice-by-slice with $(\hat z_{1,i-1},\hat z_{1,i})$ in place of
$(z^{f}_{1,1},\hat z^{f}_{1,1})$,
\[
  \bigl\|\hat z_{1,i}-\hat z_{1,i-1}\bigr\|_{2}^{2}
  =\sum_{k}\sigma_{k}^{-2}\bigl(\phi(\hatTheta_{i})_{k}-\phi(\hatTheta_{i-1})_{k}\bigr)^{2}.
\]
Summing over $i=2,\dots,T$ and dividing by $T-1$ gives \eqref{eq:temp-l2}.

(ii) By the same diagonal action \eqref{eq:psi-diff},
$\sigma_{k}(\hat z_{1,i}-\hat z_{1,i-1})_{k}=\phi(\hatTheta_{i})_{k}-\phi(\hatTheta_{i-1})_{k}$
componentwise, so taking absolute values and dividing by $\sigma_{k}$,
$|\hat z_{1,i,k}-\hat z_{1,i-1,k}|=\sigma_{k}^{-1}|\phi(\hatTheta_{i})_{k}-\phi(\hatTheta_{i-1})_{k}|$,
and summing over $k$ and $i$ gives \eqref{eq:temp-l1}.

The penalty therefore acts on
$\veclt(\log\hatTheta_{i}-\log\hatTheta_{i-1})$, not on
$\veclt(\hatTheta_{i}-\hatTheta_{i-1})$. The two are related to first
order through the Fr\'echet derivative of the matrix logarithm: writing
$\hatTheta_{i-1}=\hatTheta\succ 0$ and $\hatTheta_{i}=\hatTheta+\Delta$,
\[
  \log(\hatTheta+\Delta)-\log\hatTheta
  \;=\;D\!\log(\hatTheta)[\Delta]+O\bigl(\|\Delta\|^{2}\bigr),
  \qquad
  D\!\log(\hatTheta)[\Delta]=\int_{0}^{\infty}(\hatTheta+u I)^{-1}\,\Delta\,(\hatTheta+u I)^{-1}\,du,
\]
where $\Delta\mapsto D\!\log(\hatTheta)[\Delta]$ is linear and invertible on
symmetric matrices (it reduces to $\hatTheta^{-1}\Delta$ when $\Delta$ commutes
with $\hatTheta$). Hence, to first order, the embedding-space increment is an
invertible linear image of the raw precision increment, and the embedding-space
$\ell_{1}$ penalty is a log-Euclidean equivalent for TVGL's $\ell_{1}$
evolutionary penalty on raw precision differences, as noted in
Section~\ref{sec:flow}.
\end{proof}

\subsection{Summary}

Propositions~\ref{prop:prod-diffeo}--\ref{prop:boundary} together establish that the
entire \tvglcfm\ pipeline admits the same pullback equivalence as the
single-matrix case of \citet{collas2025}:
\begin{itemize}
\item Trajectory-level training, ODE integration, and explicit Runge--Kutta
sampling reduce to their Euclidean counterparts on $\R^{T\times d}$
(Props~\ref{prop:traj-cfm}, \ref{prop:traj-ode}, and~\ref{prop:traj-rk}).
\item Per-coordinate standardisation preserves the equivalence under a
rescaled product log-Euclidean metric
(Prop~\ref{prop:standardisation}).
\item History-conditioned forecasting with \emph{any} joint source--target
coupling, including the warm-start random walk of
Eq.~\eqref{eq:rwprior}, preserves the equivalence; the warm-start prior
is unconditionally supported on slice-wise SPD trajectories
(Props~\ref{prop:forecast-cfm} and~\ref{prop:warmstart-valid}).
\item The boundary term \eqref{eq:bnd} and the $\ell_{2}$ temporal
regulariser are, respectively, a squared log-Euclidean distance and the
average squared log-Euclidean increment between consecutive predicted
matrices; the $\ell_{1}$ temporal regulariser is a log-Euclidean equivalent
for TVGL's $\ell_{1}$ evolutionary penalty
(Props~\ref{prop:temporal} and~\ref{prop:boundary}).
\end{itemize}
Every decoded trajectory $\Theta_{1:T}=\tildePhi^{-1}(\tilde z)$ therefore
lies slice-wise in $\Spd$ by construction, without projection.

\section{Experimental Details}
\label{app:experimental-details}

This appendix gives the full experimental configuration for the generation results
(Table~\ref{tab:generation_results}), the EEG forecasting results (Table~\ref{tab:forecast}), and the
cross-domain forecasting results (Table~\ref{tab:forecast_all}). All settings are
those used to produce the reported numbers; hyperparameters were fixed across datasets within
each experiment unless noted. Code will be released on acceptance.

\subsection{Datasets and preprocessing}
\label{app:datasets}

\begin{figure}[t]
    \centering
    \includegraphics[width=\linewidth]{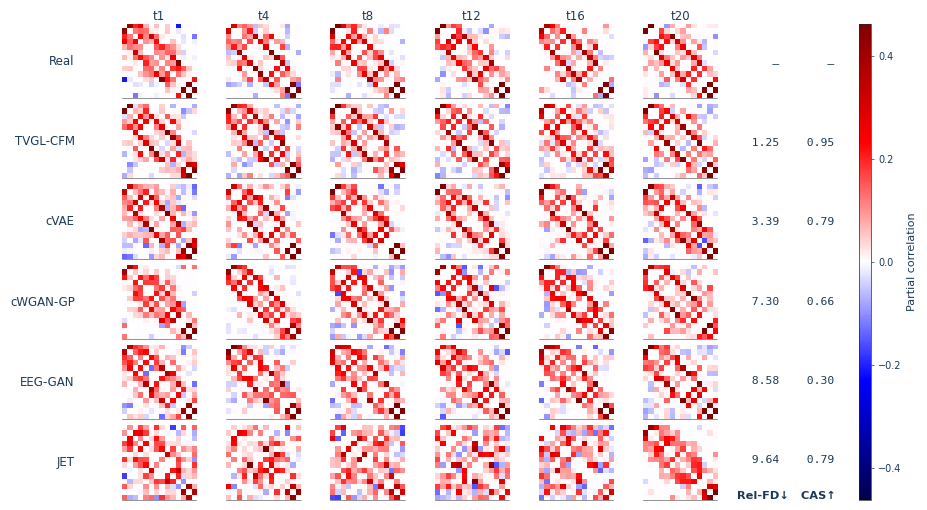}
    \caption{
    Qualitative comparison of TVGL precision-trajectory generation.
    Each row shows selected time windows from a generated or real TVGL trajectory,
    displayed as partial-correlation matrices. The TVGL-CFM row is visualized using
    a sparse post-processing readout for interpretability, while the reported
    Rel-FD and CAS values correspond to the unsparsified TVGL-CFM output used in
    quantitative evaluation. Lower Rel-FD indicates better distributional fidelity;
    higher CAS indicates better class-conditional utility.
    }
    \label{fig:tvgl_generation_qualitative}
\end{figure}

\paragraph{EEG motor imagery.}
We use four publicly available motor-imagery datasets from the MOABB benchmark
\citep{jayaram2018moabb}: BNCI2014\_001 ($p=22$ channels), BNCI2014\_002 ($p=15$),
BNCI2015\_001 ($p=13$), and Zhou2016 ($p=14$), each reduced to a two-class (right-hand vs.\
feet/left-hand) paradigm. Raw
recordings are band-pass filtered to $4$--$38$\,Hz and resampled to $128$\,Hz, EEG channels
are converted to $\mu$V, and each trial is split into $T=20$ contiguous windows. Per-channel
$z$-scoring is applied over time before windowing. We report a \emph{pooled cross-session}
split (a held-out session/run per subject) for generation and forecasting; means and standard
deviations are computed over held-out trials.

\paragraph{Cross-domain dynamical systems.}
Three simulated multivariate systems, exported as coarse-grained CSV trajectories, probe
domain generality: \textbf{Lorenz}, a network of diffusively coupled Lorenz oscillators
(deterministic chaos); \textbf{MacArthur}, a MacArthur consumer--resource competition system
(ecological population dynamics); and \textbf{Hopfield}, a continuous Hopfield
associative-memory network relaxing toward stored attractors. Each system is a single long
multivariate time series with no class label. We form overlapping pseudo-trials by sliding a
window of $\text{WIN\_SIZE}=15$ samples, using $T=K+h$ windows per trial with fixed history
$K=12$ and horizon $h=6$ for the main table; a temporal train/test split
($\text{train fraction}=0.7$) with a $\text{gap}=2$ windows between the training and test
segments prevents leakage across the boundary. Node signals are normalised per coordinate
before covariance estimation.

\paragraph{Gene-expression data.}
We use the LIBSVM ColonCancer and Leukemia datasets, retaining the $p=20$ most variable genes and $z$-scoring each gene across samples. Because the data are cross-sectional, samples are ordered separately within each class by their score on the first principal component, which defines a pseudo-time axis. Window centres are placed uniformly along this ordering, and each window bootstrap-samples $10$ observations from a local contiguous neighbourhood. Repeating this procedure produces a population of pseudo-time trajectories, which are converted to gene--gene precision trajectories using OAS covariance estimation followed by TVGL.
\subsection{TVGL target estimation}
\label{app:tvgl-settings}

Per window we estimate a shrinkage covariance with the OAS estimator of Eq.~\eqref{eq:oas}
(with a small ridge $\text{COV\_RIDGE}=10^{-4}$ for numerical safety) and divide by the mean
training diagonal so covariances are $O(1)$. The batched ADMM of Algorithm~\ref{alg:tvgl}
(all sequences and windows solved jointly, \texttt{float32}) then produces the sparse SPD
precision trajectory. We use the column group-lasso evolutionary penalty
($\psi=\text{group }\ell_2$) throughout. The regularisation weights are
$(\lambda,\beta,\rho)=(1.0,\,0.3,\,1.0)$ with $150$ ADMM iterations for EEG generation,
$(\lambda,\beta,\rho)=(0.1,\,0.3,\,1.0)$ for EEG forecasting, and
$(\lambda,\beta,\rho)=(0.1,\,0.2,\,1.0)$ with $100$ iterations for the cross-domain systems.
These weights are held fixed within each experiment rather than tuned per dataset, which
(as noted in Section~\ref{sec:exp}) is a deliberately conservative choice that bounds the
achievable ceiling.

\subsection{Embedding and TVGL--CFM architecture}
\label{app:arch}

Each precision trajectory is embedded window-wise through the log-Euclidean chart
(Eq.~\eqref{eq:trajectory-embedding}) and standardised per coordinate using training-set
statistics (Eq.~\eqref{eq:embedding-standardisation}). The velocity field $u_\theta$
(Eq.~\eqref{eq:ufield}) is a $4$-layer Transformer encoder with model width
$d_{\mathrm{model}}=256$, $8$ attention heads, feed-forward width $4\,d_{\mathrm{model}}$, and
no dropout. Flow time $s$, window time $\tau$, and the class label enter through independent
random-Fourier-feature embeddings ($N_{\mathrm{FF}}=12$ frequencies for EEG, $8$ for the
cross-domain systems) followed by an MLP, plus a learned class embedding. Training uses AdamW
(learning rate $5\times10^{-4}$, weight decay $10^{-4}$), $500$ epochs, and batch size $32$.
The temporal-consistency weight is $\lambda_{\mathrm{temp}}=0.1$ for generation. Sampling
integrates the flow ODE with an explicit RK4 solver ($50$ steps for EEG, $30$ for the
cross-domain systems).

\paragraph{Forecasting head.}
The forecaster adds a $2$-layer context Transformer over the history tokens whose pooled
summary conditions every future token (Eqs.~\eqref{eq:futurefield}). The warm-start
random-walk source (Eq.~\eqref{eq:rwprior}) uses $r=3$ recent increments and
$\alpha_\mu=\alpha_\sigma=1.0$. The structural losses use temporal weight
$\lambda_{\mathrm{temp}}=0.02$ and boundary weight $\lambda_{\mathrm{bnd}}=0.10$. Forecasts
are drawn as an $M=8$-member ensemble; the point forecast is the ensemble mean in
standardised log-Euclidean coordinates, decoded once through $\phi^{-1}$, and the spread gives
predictive uncertainty. The \textsc{sparsified} variant additionally soft-thresholds the
decoded off-diagonal entries at TVGL's $\lambda/\rho$ cutoff and re-projects to $\Spd$
(minimum eigenvalue $10^{-6}$).

\subsection{Baselines}
\label{app:baselines}

Two families of baselines are used. \emph{Heuristics} require no learning:
\textsc{Persistence} copies the last observed window $\Theta_K$ across the horizon;
\textsc{Linear history-drift} extrapolates linearly in the standardised log-Euclidean chart
from the recent history slope; and \textsc{Warm-start prior (uncorrected)} is the random-walk
source of Eq.~\eqref{eq:rwprior} itself, ensemble-averaged with no flow correction. The gap
between the uncorrected prior and the flow-corrected forecast isolates the contribution of the
learned transport.

\emph{Raw-signal generators} synthesise raw multivariate signals which are then passed through
the \emph{identical} OAS$+$TVGL estimator, so the comparison isolates ``forecast/generate the
precision trajectory directly'' from ``generate the raw signal, then estimate the trajectory''.
All raw-signal models are history-conditioned (for forecasting) or class-conditioned (for
generation) and trained with AdamW. Their configurations are:
\begin{itemize}
  \item \textbf{Basic-CNN}: a deterministic convolutional encoder--decoder (hidden width
  $128$--$256$), $300$--$500$ epochs, learning rate $10^{-3}$.
  \item \textbf{LSTM} (cross-domain): a $2$-layer LSTM (hidden $128$) sequence-to-sequence
  forecaster, $500$ epochs, learning rate $10^{-3}$.
  \item \textbf{cVAE}: a conditional VAE (latent $32$--$64$, hidden $256$--$512$,
  KL weight $10^{-3}$), $300$--$500$ epochs, learning rate $10^{-3}$.
  \item \textbf{cWGAN-GP}: a standard conditional Wasserstein GAN with a
  transposed-convolutional generator and two-sided gradient penalty
  (latent $64$--$128$, hidden $128$--$256$, $n_{\mathrm{critic}}=3$, GP weight $10$),
  $300$--$500$ epochs, learning rate $2\times10^{-4}$.
  \item \textbf{EEG-GAN-style} (EEG only): an EEG-specific multiscale convolutional
  Wasserstein GAN using interpolation-based upsampling, long temporal kernels,
  PixelNorm, a one-sided gradient penalty, and critic drift regularisation
  ($n_{\mathrm{critic}}=5$, GP weight $10$, drift $\varepsilon=10^{-3}$), learning rate
  $10^{-4}$.
  \item \textbf{JET} (EEG only): a state of the art JET-style transformer flow matching model on raw EEG
  (width $256$, depth $6$, $8$ heads) \cite{jet}, learning rate $5\times10^{-5}$, EMA on weights.
\end{itemize}
Generated raw signals are clipped to $\pm 5$ standard deviations before inverse
standardisation and covariance estimation.
\subsection{Compute}
\label{app:compute}

All experiments were run on a single NVIDIA L4 GPU. TVGL target estimation was batched across
sequences and windows per dataset. Each TVGL--CFM
generator or forecaster trained in under one hour per dataset, while sampling a full forecast
ensemble required only a few seconds per batch.

\subsection{Use of Large Language Models}
\label{app:llm}

Large language models were used to assist with notation refinement, formatting, proofreading,
clarity of presentation, figure design, and preliminary proof checking. The authors reviewed,
verified, and, where necessary, revised all generated suggestions before inclusion in the manuscript.
All scientific claims, mathematical arguments, experimental choices, implementations, and reported
results remain the responsibility of the authors.
\end{document}